\documentclass[letterpaper,10pt]{article}
\usepackage{style}
\newgeometry{vmargin={25mm}, hmargin={20mm,20mm}}
\usepackage[affil-it]{authblk}
\usepackage[numbers]{natbib}

\usepackage{microtype}
\usepackage{graphicx}
\usepackage{subcaption}
\usepackage{booktabs} 
\usepackage{multirow}
\usepackage[parfill]{parskip}
\definecolor{mydarkblue}{rgb}{0,0.08,0.60}
\hypersetup{ %
colorlinks=true,
linkcolor=mydarkblue,
citecolor=mydarkblue,
filecolor=mydarkblue,
urlcolor=mydarkblue,
}

\title{Goldilocks RL: Tuning Task Difficulty to Escape Sparse Rewards for Reasoning}
\makeatletter
\renewcommand{\@fnsymbol}[1]{%
   \ifcase#1\or $\dagger$\or *\fi}
\makeatother
\author[1]{Ilia Mahrooghi\thanks{Corresponding Author, ilia.mahrooghi@epfl.ch}}
\author[2]{Aryo Lotfi}
\author[1,2]{Emmanuel Abbe}

\affil[1]{EPFL}
\affil[2]{Apple}

\date{}
\begin{document}
\maketitle

\begin{abstract} \label{sec:abs}
Reinforcement learning has emerged as a powerful paradigm for unlocking reasoning capabilities in language models. However, relying on sparse rewards makes this process highly sample-inefficient, as models must navigate vast search spaces with minimal feedback. While classic curriculum learning aims to mitigate this by ordering data based on complexity, prior works have primarily targeted small datasets and do not directly transfer to the large-scale settings typical of modern LM training. Furthermore, the right ordering for a specific model is often unclear. To address this, we propose Goldilocks, a novel teacher-driven data sampling strategy that aims to predict each question's difficulty for the student model. The teacher model selects questions of appropriate difficulty for the student model, i.e., questions that are neither too easy nor too hard (Goldilocks principle), while training the student with GRPO. By leveraging the student's performance on seen samples, the teacher continuously adapts to the student's evolving abilities. On the OpenMathReasoning dataset, Goldilocks data sampling improves the performance of models trained with standard GRPO under the same compute budget.
\end{abstract}

\section{Introduction} \label{sec:intro}
The reasoning capabilities of Transformer-based models have been a focal point of recent research, with Reinforcement Learning (RL) emerging as a powerful paradigm for enhancing these abilities. Recent RL methods incentivize the generation of a detailed Chain of Thought (CoT) \citep{wei2022chain}, enabling models to solve complex problems \citep{zelikman2022star, deepseek2025incentivizing, hou2025t1}. Additionally, increasing the computational budget at inference time, known as scaling test-time compute, has been shown to further enhance these reasoning capabilities \citep{snell2024scaling, openai2024learning}. While RL has proven effective in unlocking complex reasoning behaviors, the computational efficiency of these methods remains a significant challenge. Recent findings by \citet{khatri2025art} underscore that scaling is a fundamental driver of performance. This observation suggests that while models have the potential for continued improvement, realizing this potential through brute-force scaling is resource-intensive, making training efficiency a paramount concern.

Outcome Supervision (OS) \citep{cobbe2021training, yu2023ovm} further complicates this efficiency landscape. OS rewards the model only if the final answer is correct, creating a sparse reward signal where the model receives no feedback during intermediate reasoning steps. Because of this sparsity, the model must explore a vast number of reasoning paths to stumble upon a correct solution. This need for extensive exploration to find a positive signal makes learning slow and resource-intensive, serving as another critical motivation for improving the efficiency of RL training methods.

Curriculum Learning (CL) \citep{bengio2009curriculum} addresses these inefficiencies by presenting training examples in a meaningful order rather than a random one. Recent work \citep{qu2025can, parashar2025curriculum, shen2025bots, chen2025self, yi2025safer} has explored various strategies for selecting and scheduling data in language model training, but relies on two recurring assumptions. Many approaches are \emph{history-based}, estimating sample difficulty by tracking a model's past performance on specific instances where it's feasible only when the dataset is small enough to traverse repeatedly. Others assume the dataset is partitioned into predefined difficulty categories and schedule at the category level. More recent efforts attempt to mitigate the cost of uninformative samples by performing rollouts and skipping the policy update when they prove non-informative \citep{wang2025dump, xiong2025reinforce}, but this still incurs the full GPU cost of generation. \citet{gu2026actor} move beyond history-based heuristics with a bandit-based co-adaptive curriculum, but their method remains targeted at small-scale datasets.

Both assumptions break down at the scale of modern LM training, where corpora are traversed only once and rarely come with difficulty labels. Revisiting examples just to estimate their difficulty wastes GPU resources, history-based methods provide no signal for unseen data, and category-based scheduling is unavailable. Standard CL methods are therefore not just suboptimal in this regime but \textbf{inapplicable}, and cannot serve as direct baselines in our setting. We treat \textbf{closing this gap as part of our contribution}, proposing a strategy that judges and schedules new data on the fly without prior training passes or auxiliary annotations.

In this work, we propose \textbf{Goldilocks}, a novel teacher-driven framework where, in parallel with training the student model, a teacher continuously learns the student's current capabilities and selects questions of appropriate difficulty — neither too easy nor too hard. The teacher generalizes to new data streams the student has not seen, building a curriculum tailored to the student at each point in training. This design removes both bottlenecks of prior CL methods: the student does not need to see each instance multiple times, and the teacher does not rely on category structure in the data.

The paper is organized as follows. Section~\ref{sec:grpo_verifiable_reward} analyzes GRPO \citep{shao2024deepseekmath} gradient dynamics under verifiable rewards. Section~\ref{sec:training_proc} presents our joint training framework, and Section~\ref{sec:student_teacher_arch} the Teacher–Student architecture. Sections~\ref{sec:experiments} and~\ref{sec:ablation} report empirical results and ablations. Section~\ref{sec:related} discusses related work, and Section~\ref{sec:conclusion} concludes.


\section{Theoretical Motivation: The Goldilocks Zone}\label{sec:grpo_verifiable_reward}

In the context of fine-tuning language models with RL, the objective 
is to maximize the expected reward on a dataset of verifiable answers. 
The standard policy gradient loss is defined as
\[
\mathcal{L}_{PG} = - \mathbb{E}_{s_t, a_t} \left[ \log \pi_\theta(a_t | s_t) 
\hat{A}_t \right],
\]
where $\pi_\theta$ denotes the policy and $\hat{A}_t$ represents the 
estimated advantage. In the context of updating language models, the 
state $s_t$ corresponds to the prompt $q$ combined with the history of 
generated tokens $o_{<t}$, and the action $a_t$ is the next token 
$o_t$. The objective is defined over the generated sequence of 
length $T$:
\[
\mathcal{L}_{\text{PG}}(\theta) = - \mathbb{E}_{q \sim \mathcal{D}, 
o \sim \pi_\theta} \left[ \sum_{t=1}^{T} \log \pi_\theta(o_t \mid q, 
o_{<t}) \hat{A}_t \right],
\]
where $\hat{A}_t$ represents the advantage estimated at token step $t$.

In GRPO, the advantage is computed by 
normalizing rewards across a group of $G$ outputs sampled for a single 
prompt. In the RLVR setting, the reward is assigned at the sequence 
level rather than per token, meaning all tokens within an output $o$ 
share the same advantage estimate. Thus $\hat{A}_t$ is constant across 
token steps $t$ for a given question $q$ and output $o$, and we denote 
it $\hat{A}_{q,o}$. We consider a binary verification reward 
$r_{\text{ver}}=1$ for a correct final answer and $r_{\text{ver}}=0$ 
otherwise.

\paragraph{Advantage computation.} The verification reward follows a 
Bernoulli distribution: $r_{\text{ver}} \sim \text{Bernoulli}(p_q)$. 
The GRPO algorithm computes the advantage $\hat{A}_{q, o}$ by 
standardizing the reward:
\[
    \hat{A}_{q, o} = \frac{r_{\text{ver}} - 
    \text{mean}(r_{\text{ver}})}{\text{std}(r_{\text{ver}})}.
\]
Since $r_{\text{ver}} \sim \text{Bernoulli}(p_q)$, the moments are:
\[
    \text{mean}(r_{\text{ver}}) = p_q \quad \text{and} \quad 
    \text{std}(r_{\text{ver}}) = \sqrt{p_q(1 - p_q)}.
\]
Substituting these values, the advantage $\hat{A}_{q, o}$ takes on two 
distinct values depending on the correctness of the output\footnote{This analysis extends to any reward of the form 
$r_{\text{total}} = r_{\text{ver}} + f(t)$, where $f(t)$ converges to 
a constant during training. See Appendix~\ref{app:reward_generalization} 
for a formal proof.}:
\begin{itemize}
    \item \textbf{Correct answer} (probability $p_q$): 
    $\hat{A}_{q, o} = \dfrac{1 - p_q}{\sqrt{p_q (1 - p_q)}}$
    \item \textbf{Incorrect answer} (probability $1 - p_q$): 
    $\hat{A}_{q, o} = \dfrac{-p_q}{\sqrt{p_q (1 - p_q)}}$
\end{itemize}

\paragraph{Zero-variance batches.}
When all $G$ rollouts for a prompt $q$ are simultaneously correct or 
simultaneously incorrect, every rollout receives the same reward, all advantages $\hat{A}_{q,o}$ are zero. The gradient contribution of the entire 
batch therefore vanishes and no parameter update occurs. The 
probability of this zero-variance event is
\begin{equation}
\mathbb{P}(\text{zero variance}) = p_q^G + (1-p_q)^G,
\label{eq:zero_var_prob}
\end{equation}
which is minimized at $p_q = 1/2$ and 
approaches $1$ as $p_q \to 0$ or $p_q \to 1$. Therefore, prompts near the extremes waste an entire group of rollouts with high 
probability, regardless of model capacity or training budget.
\paragraph{Gradient norm scaling.} 
We have shown that substituting these advantage values into the 
policy gradient objective and separating the contributions of correct 
and incorrect outcomes gives
\begin{multline}
\left\| \nabla_\theta L_{PG} \right\| = 
\sqrt{p_q (1 - p_q)} \,
\Bigg\|\mathbb{E} \left[ \sum_{t=1}^T \nabla_\theta \log 
\pi_\theta(o_t \mid q, o_{<t}) \,\bigg|\, r_{\text{ver}} = 1\right]
\\
- \mathbb{E} \left[ \sum_{t=1}^T \nabla_\theta \log 
\pi_\theta(o_t \mid q, o_{<t}) \,\bigg|\, r_{\text{ver}} = 0\right] 
\Bigg\| 
= \sqrt{p_q(1-p_q)} \,
\left\| \nabla_\theta \log \frac{p_q}{1-p_q} \right\|.
\label{eq:grad_derivation_norm}
\end{multline}
where the second equality is an exact identity expressing the 
gradient norm as the Bernoulli standard deviation 
$\sqrt{p_q(1-p_q)}$ times the norm of the log-odds gradient 
$\nabla_\theta \log \frac{p_q(\theta)}{1-p_q(\theta)}$, where 
$p_q = p_q(\theta)$ is the verification probability. The proof 
is given in Appendix~\ref{app:vanishing_proof}.

\paragraph{Exact identity and vanishing signal.} 
Eq.~\eqref{eq:grad_derivation_norm} already reveals that the gradient 
norm scales as $\sqrt{p_q(1-p_q)}$, which is maximized at $p_q = 1/2$. 
Under the additional assumption that the per-token gradient 
$\|\nabla_\theta \log \pi_\theta(o_t \mid q, o_{<t})\|$ is uniformly 
bounded by some constant $B$ and the context length $T$ is finite, 
this scaling translates into the upper bound
\begin{equation}
\left\|\nabla_\theta L_{PG}\right\| \;\leq\; 
2BT\sqrt{p_q(1-p_q)},
\label{eq:grad_bound}
\end{equation}
where $p_q = p_q(\theta)$ is the verification probability at the 
current parameters. Eq.~\eqref{eq:grad_bound} shows that the gradient 
norm is structurally bounded by $\sqrt{p_q(1-p_q)}$, with the constant 
$2BT$ uniform across all prompts at a fixed checkpoint. Therefore, as $p_q \to 0$ or $p_q \to 1$, the gradient norm $\left\|\nabla_\theta \mathcal{L}_{\text{PG}}\right\| \to 0$, leading to vanishing gradients. The formal proof is given in Appendix~\ref{app:vanishing_proof}.

Taken together, Eq.~\eqref{eq:zero_var_prob} and 
Eq.~\eqref{eq:grad_bound} give two independent reasons why prompts 
near $p_q \in \{0,1\}$ are wasteful: they suppress the gradient 
magnitude and are overwhelmingly likely to produce zero-variance 
batches. Conversely, prompts near $p_q = 1/2$ minimize the probability of zero-variance batches Eq.~\eqref{eq:zero_var_prob} and saturate the structural upper bound on the gradient norm Eq.~\eqref{eq:grad_bound}. This 
motivates the Goldilocks Teacher: select prompts at the edge of the 
model's current solvability, where $\sqrt{p_q(1-p_q)}$ is largest.

\begin{figure*}[t]
    \centering
    \includegraphics[width=0.95\textwidth]{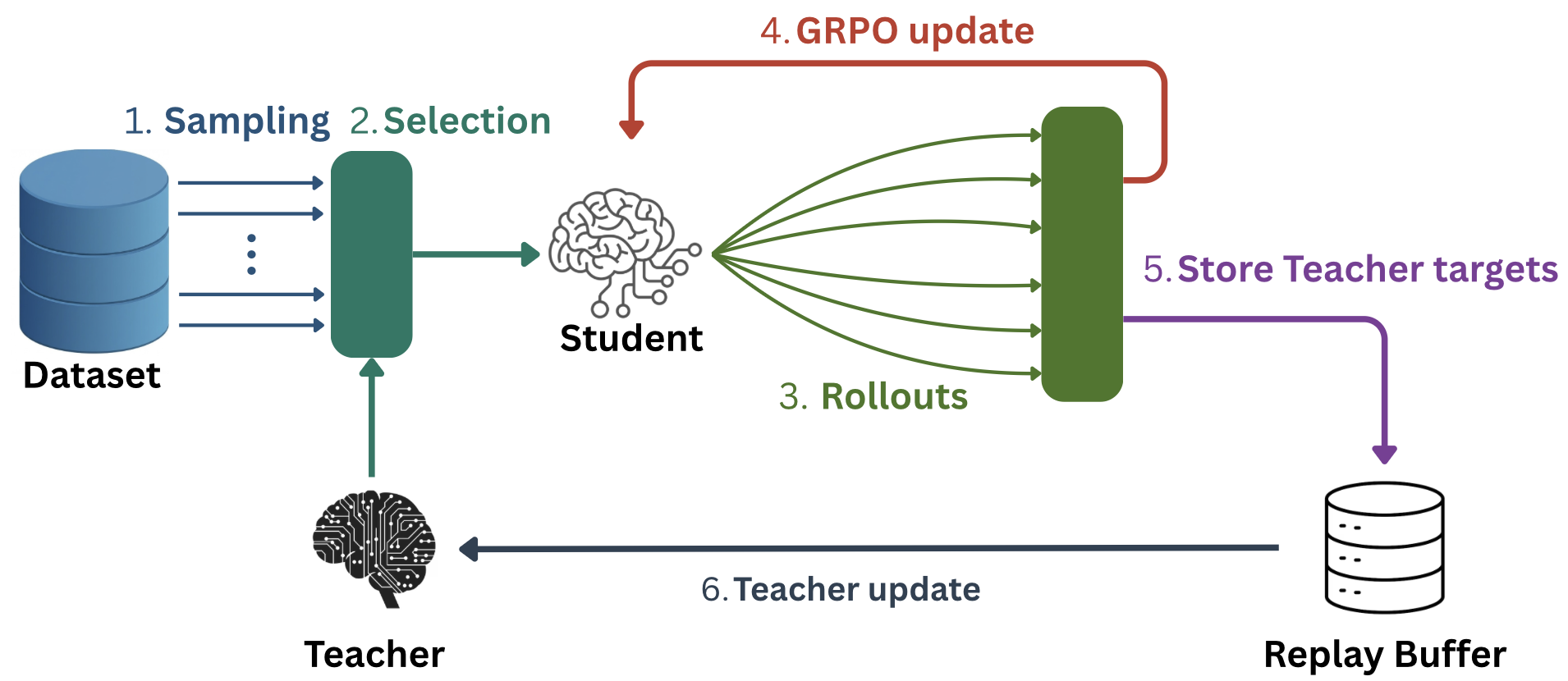}
    \caption{\textbf{Overview of the Goldilocks Framework.} The continuous Teacher-Student training loop.}
    \label{fig:framework}
\end{figure*}

\section{Joint Training Procedure} \label{sec:training_proc}
We now detail the overall training loop, illustrated in Figure~\ref{fig:framework}. The process operates in a cycle that alternates between data selection, student optimization via GRPO, and continuous teacher refinement.

\subsection{Data Selection Policy (Steps 1-2)}\label{sec:teacher_data_selection}
During the data selection phase, the Teacher samples a candidate pool, denoted as $\mathcal{C}$, consisting of $K_{\text{candidate}}$ examples drawn randomly from the dataset. For each candidate question $q \in \mathcal{C}$, the Teacher computes a predicted utility score $\hat{v}_q = f_\phi(q)$. To balance exploration and exploitation, the final training sample $q^*$ is selected using an $\epsilon$-greedy strategy (Algorithm \ref{alg:teacher_selection}):
\begin{itemize}
    \item With probability $\epsilon$, a question is selected uniformly at random from $\mathcal{C}$ to ensure diverse data coverage.
    \item With probability $1 - \epsilon$, the question with the highest predicted utility is selected deterministically, i.e.,
    \[
    q^* = \operatorname*{arg\,max}_{q \in \mathcal{C}} \hat{v}_q.
    \]
\end{itemize}
\subsection{Student Rollouts and Optimization (Steps 3-4)}\label{sec:student_training}
Once the prompt $q^*$ is selected, the Student model generates a group of $G$ rollouts, $\{o_1, \dots, o_G\}$. The verification reward $r_{\text{ver}}$ is computed for each output, and the advantages are estimated using the group-relative normalization described in Section~\ref{sec:grpo_verifiable_reward}. Based on these advantages, the gradient is computed. To ensure optimization stability, gradients are accumulated over a batch of selected prompts before a parameter update is applied to the Student model.

\subsection{Teacher Refinement (Steps 5-6)}\label{sec:teacher_training}
The Teacher is trained online to predict the learning potential of specific prompts based on the Student's real-time feedback. Following the Student's rollouts for a question $q$, we calculate the empirical success rate $\hat{p}_q$ as the fraction of correct solutions within the generated group. The regression target $y_q$ is then set to the empirical standard deviation of the verification reward:
\[
y_q = \sqrt{\hat{p}_q (1 - \hat{p}_q)}.
\]
This tuple $(q, y_q)$ is pushed into a replay buffer $\mathcal{D}_{\text{replay}}$ of capacity $N_{\text{replay}}$, which operates as a sliding window to retain the most recent interaction data.

The Teacher updates its parameters $\phi$ periodically as detailed in Algorithm \ref{alg:teacher_update}. After every $M_{\text{update}}$ samples processed by the Student, the Teacher undergoes a dedicated training phase lasting $E_{\text{teacher}}$ epochs. During this phase, we iterate over the entire replay buffer $\mathcal{D}_{\text{replay}}$, processing the data in mini-batches $\mathcal{B}$ to minimize the Mean Squared Error:
\[
\mathcal{L}_{\mathcal{B}}(\phi) = \frac{1}{|\mathcal{B}|} \sum_{(q, y_q) \in \mathcal{B}} \left( f_\phi(q) - y_q \right)^2.
\]
This objective ensures the Teacher continuously aligns its predictions with the Student's capabilities.

\section{Teacher and Student Architecture}\label{sec:student_teacher_arch}

\subsection{Student}\label{sec:student_arch}
The Student model is initialized as a pretrained language model, parameterized by $\theta$. It functions as the primary policy, denoted as $\pi_\theta(o \mid q)$, which maps an input question $q$ to a generated output sequence $o$. This output typically consists of a reasoning chain followed by a final answer. The Student is fine-tuned via RL (specifically GRPO) to maximize the expected reward of its generated trajectories. Throughout the training process, the Student's parameters are updated to improve the correctness of its reasoning, while serving as the source of feedback signals (success/failure) used to train the Teacher.

\subsection{Teacher}\label{sec:teacher_arch}
The Teacher functions as a value predictor built upon a trainable language model backbone. For a given input question $q$, we extract the final layer embeddings and apply mean pooling to obtain a fixed-size vector representation. This vector is then passed through a linear projection layer. To strictly enforce the valid range of the standard deviation term (which lies between $0$ and $0.5$ for a Bernoulli variable), we apply a scaled sigmoid activation, i.e., 
\[
f_\phi(q) = 0.5 \cdot \sigma(\mathbf{w}^T \cdot \text{MeanPool}(\text{Embed}(q)) + b),
\]
where $f_\phi(q)$ represents the predicted utility of question $q$. The parameters $\phi$ encompass both the projection head and the pretrained language model, allowing the Teacher to learn deep semantic features relevant to difficulty estimation. For a detailed exploration of alternative teacher backbones and projection strategies, please refer to Appendix~\ref{app:other_teacher_arc}.

\section{Experiments and Results} \label{sec:experiments}
To validate Goldilocks, we conduct a comprehensive evaluation using the OpenMathReasoning dataset \citep{moshkov2025aimo2}, which comprises over 3 million chain-of-thought problems. Additional dataset details are in Appendix~\ref{app:datasets}. Following common practice in RL-based reasoning research \citep{shao2024deepseekmath, amani2025rl, chen2025self}, we focus on mathematical reasoning as our target domain. Given the substantial computational cost of GRPO experiments on this dataset (each requiring more than 1000 GPU hours), rather than extending evaluation across additional datasets, we prioritized longer training runs on this benchmark to ensure our method remains useful in extended training regimes; a detailed accounting of the total compute footprint and the rationale for prioritizing depth over benchmark breadth is provided in Appendix~\ref{app:compute_scope}. We train models using the GRPO algorithm across a diverse range of model families, including Qwen2.5-1.5B \citep{qwen2.5}, Qwen3-4B \citep{yang2025qwen3}, Phi-4-mini-instruct (a 4B model) \citep{abdin2024phi}, and Olmo2-1B \citep{olmo20242}. All experiments were implemented using the TRL library \citep{vonwerra2020trl}.

\subsection{Experimental Setup}
For all experimental configurations, we employ the selected open-weight models as Student policies, optimizing them via GRPO with a fixed group size of $G=16$. As discussed in the introduction, history-based and category-based curriculum methods are not applicable at this scale and cannot serve as direct baselines: history-based methods require multiple passes over each prompt to estimate per-instance difficulty, and category-based methods require difficulty labels the dataset does not provide. We therefore compare exclusively against vanilla GRPO under matched global batch size, hyperparameters, and effective compute, and rely on the training-dynamics analysis in Section~\ref{sec:training_dynamics} (zero-variance fraction, gradient norm, reward standard deviation) to characterize the source of the gain. We adopt a stratified strategy for Teacher instantiation: for compact architectures (Qwen2.5-1.5B and Olmo2-1B), the Teacher is initialized from the same base model as the Student. Conversely, for the larger 4B-parameter models (Phi-4-mini-instruct and Qwen3-4B), we utilize the Qwen3-1.7B \citep{yang2025qwen3} model as the Teacher to evaluate cross-model curriculum generation. Comprehensive implementation details are provided in Appendix~\ref{app:implementation}. 

In all experiments, we utilized a fixed computational budget of 8 GPUs. For the GRPO baseline, all 8 GPUs were dedicated to the policy model. In contrast, the Goldilocks framework allocated 2 GPUs to the Teacher for dynamic data selection and 6 GPUs to the Student for training. Crucially, to isolate the efficacy of the curriculum strategy, we maintained identical global batch sizes and hyperparameters across both configurations. To ensure a fair comparison of computational cost, we normalize the training steps based on the effective resource allocation; specifically, we compare step $n$ of the Goldilocks Student against step $\frac{8}{6}n$ of the baseline, strictly accounting for the difference in available compute for them. We justify this scheme in Appendix~\ref{app:compute_fairness}.

\begin{table*}[t]
    \centering
    \captionsetup{font=small, width=0.9\textwidth}
    \caption{\textbf{Performance on OpenMathReasoning.} We compare the Pass@1 accuracy of the Goldilocks against the GRPO baseline. Models denoted with $^{\dagger}$ employ a Qwen3-1.7B Teacher.}   
    \label{tab:openmath_results}
    
    \vspace{0.2cm}
    
    \small 
    \begin{tabular*}{0.8\textwidth}{@{\extracolsep{\fill}}l c c c}
        \toprule
        \textbf{Model} & \textbf{Base Model} & \textbf{Baseline (GRPO)} & \textbf{Goldilocks (Ours)} \\
        \midrule
        Olmo2-1B & $2.5\%$ & $11.7\%$ & $14.9\%$ \\
        Qwen2.5-1.5B & $13.48\%$ & $30.6\%$ & $33.4\%$ \\ 
        \midrule
        Qwen3-4B$^{\dagger}$ & $12.4\%$ & $58.1\%$ & $59.7\%$ \\
        Phi-4-4B-Instruct$^{\dagger}$ & $25.4\%$ & $37.1\%$ & $41.0\%$ \\ 
        \bottomrule
    \end{tabular*}
\end{table*}

\subsection{Main Results}
Table \ref{tab:openmath_results} presents the performance benchmarks on the validation set of the OpenMathReasoning. Detailed specifications for the training step counts and the step-balancing logic between Goldilocks and the baselines are provided in Appendix~\ref{app:training_steps}.

For all reported validation accuracies, we average the last five validation steps to reduce statistical noise from both RL training and the evaluation itself, though accuracy plots are presented without denoising. All other training dynamics (e.g., student and teacher metrics) are smoothed using an Exponential Moving Average (EMA) with $\alpha=0.9$.

Figure~\ref{fig:test_accuracy} depicts the validation accuracy curves throughout the optimization process, contrasting the baseline GRPO model Qwen2.5-1.5B with the Goldilocks Student Qwen2.5-1.5B and Teacher Qwen2.5-1.5B (see Appendix~\ref{app:appendix_val_acc} for evaluations based on wall-clock time and token budget).

\begin{figure}[h!]
    \centering
    \includegraphics[width=0.6\linewidth]{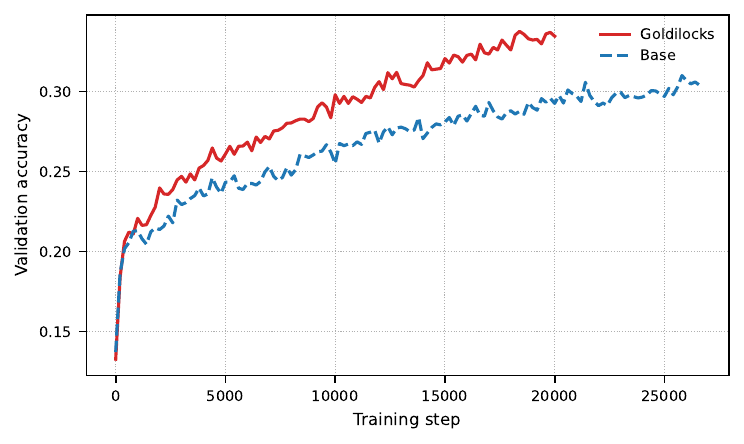}
    \caption{Evolution of validation accuracy over training steps.}
    \label{fig:test_accuracy}
\end{figure}

\subsection{Student Training Dynamics and Analysis} \label{sec:training_dynamics}
In this section, we examine the comparative training dynamics of the Goldilocks Student model relative to the GRPO baseline (for more figures see Appendix~\ref{app:more_figures}). To understand the mechanism behind the improved performance, we first examine the evolution of the training success rate. Figure~\ref{fig:reward_stats} compares the average $r_{\text{ver}}$ and its standard deviation. The average reward of Goldilocks exhibits a steeper slope, indicating accelerated learning. Moreover, the vertical offset confirms our sampling objective: the Teacher actively prioritizes questions where $p_q$ is closer to $0.5$, thereby maximizing the variance and learning signal, which is further confirmed by the consistently higher reward standard deviation shown in Figure~\ref{fig:dyn_reward_std}.

\begin{figure}[h!]
    \centering
    \begin{subfigure}[b]{0.48\linewidth}
        \centering
        \includegraphics[width=\linewidth]{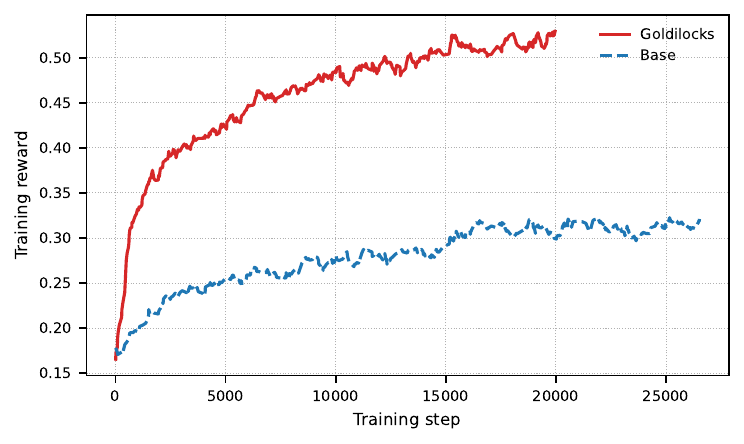}
        \caption{Average Training Reward}
        \label{fig:avg_reward}
    \end{subfigure}
    \hfill
    \begin{subfigure}[b]{0.48\linewidth}
        \centering
        \includegraphics[width=\linewidth]{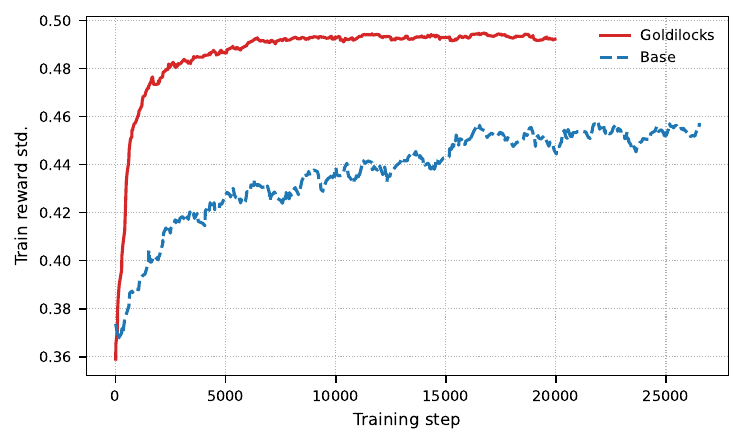}
        \caption{Training Reward Std}
        \label{fig:dyn_reward_std}
    \end{subfigure}
    \caption{\textbf{Training Reward Dynamics.}}
    \label{fig:reward_stats}
\end{figure}

By prioritizing samples with higher uncertainty, Goldilocks significantly reduces the fraction of questions with zero variance (i.e., $p_q(1-p_q)=0$, leading to zero gradients during GRPO), as shown in Figure~\ref{fig:dyn_zero_var}, ensuring the optimizer rarely wastes computation on uninformative samples. Consequently, as shown in Figure~\ref{fig:dyn_grad_norm}, Goldilocks maintains significantly larger gradient norms compared to the baseline, preventing optimization stagnation and ensuring more effective parameter updates per compute step. This empirically validates the theoretical result of Eq.~\eqref{eq:grad_derivation_norm}, where we showed that the gradient norm scales linearly with $\sqrt{p_q (1 - p_q)}$.

\begin{figure}[h!]
    \centering
    \begin{subfigure}[b]{0.48\linewidth}
        \centering
        \includegraphics[width=\linewidth]{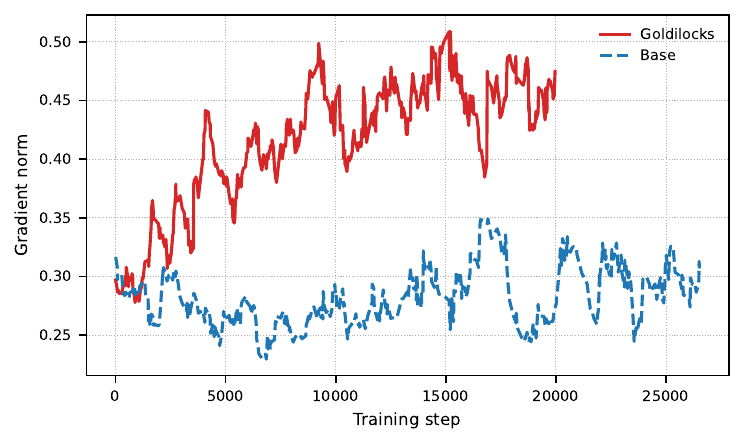}
        \caption{Gradient Norm}
        \label{fig:dyn_grad_norm}
    \end{subfigure}
    \hfill
    \begin{subfigure}[b]{0.48\linewidth}
        \centering
        \includegraphics[width=\linewidth]{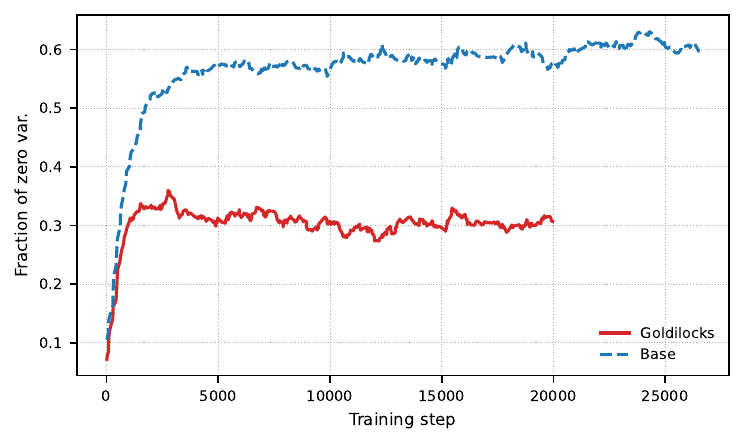}
        \caption{Fraction of Zero-Variance Samples}
        \label{fig:dyn_zero_var}
    \end{subfigure}
    \caption{\textbf{Optimization Dynamics.}}
    \label{fig:optim_stats}
\end{figure}

\subsection{Teacher Training Dynamics and Analysis}
To validate the Teacher's reliability, we track its Mean Absolute Error (MAE) on samples $D_{\text{new}}$ not yet encountered in previous training steps, which serves as an online validation score:
\[
\mathcal{L}_{MAE}(\phi) = \frac{1}{|D_{\text{new}}|} \sum_{(q, y_q) \in D_{\text{new}}} \left| f_\phi(q) - y_q \right|.
\]
As shown in Figure~\ref{fig:teacher_error}, this MAE decreases consistently over training, and the Teacher's error on zero-variance samples remains lower, indicating that the Teacher generalizes well rather than memorizing the replay buffer. Beyond accuracy, Figure~\ref{fig:teacher_prediction_dist} tracks the mean ($\mu$) and standard deviation ($\sigma$) of the Teacher's predicted scores. The sustained $\sigma$ confirms the Teacher continues to differentiate between difficulties, avoiding mode collapse, while the downward trend in $\mu$ reflects the student's growing mastery over the dataset, confirming that the Teacher dynamically adapts to the student's evolving capabilities rather than remaining static.
\begin{figure}[h!]
    \centering
    \begin{subfigure}[b]{0.48\linewidth}
        \centering
        \includegraphics[width=\linewidth]{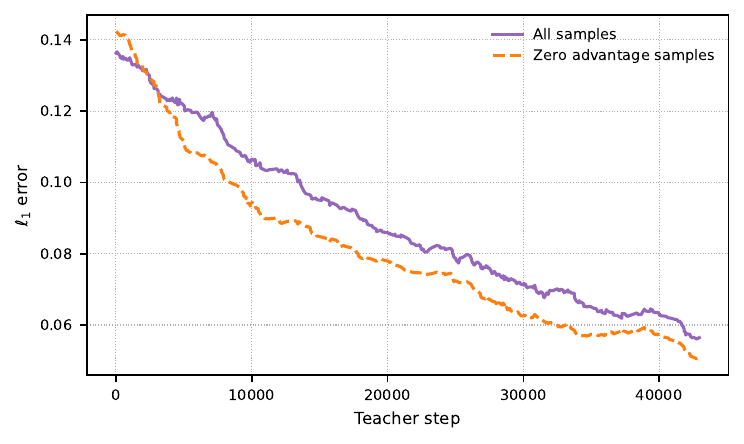}
        \caption{Teacher MAE on unseen samples}
        \label{fig:teacher_error}
    \end{subfigure}
    \hfill
    \begin{subfigure}[b]{0.48\linewidth}
        \centering
        \includegraphics[width=\linewidth]{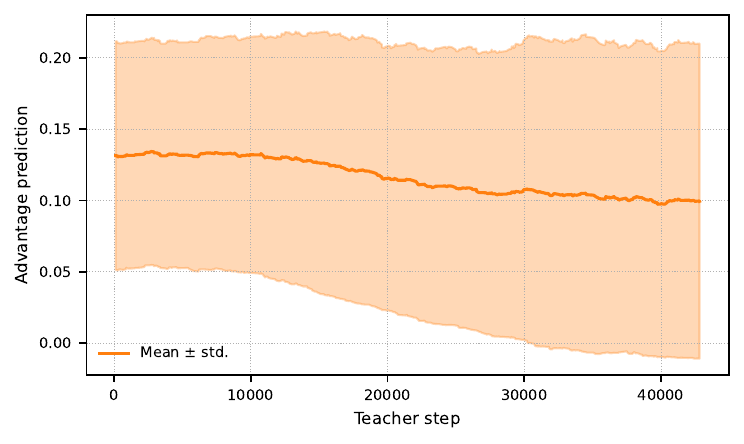}
        \caption{Predicted Goldilocks score distribution}
        \label{fig:teacher_prediction_dist}
    \end{subfigure}
    \caption{\textbf{Teacher Prediction Analysis.}}
    \label{fig:teacher_stats}
\end{figure}

\section{Ablation Study} \label{sec:ablation}
To assess the versatility of Goldilocks, we conduct two independent ablation experiments using the Qwen2.5-1.5B model. These experiments isolate the impact of our curriculum strategy when paired with advanced loss formulations and explicit regularization terms. The detailed training plots for these experiments are provided in Appendix~\ref{app:ablation_plots}.

\begin{table}[h!]
    \centering
    \caption{\textbf{Ablation Studies on Qwen2.5-1.5B.} Performance of Goldilocks and GRPO baseline across different settings.}
    \begin{tabular}{l l c c}
        \toprule
        \textbf{Setting} & \textbf{Method} & \textbf{Param.} & \textbf{Acc. (\%)} \\
        \midrule
        \multirow{2}{*}{Standard GRPO} & Baseline & -- & $27.6\%$ \\
                                        & Goldilocks & -- & $29.5\%$ \\
        \midrule
        \multirow{2}{*}{DAPO Loss} & Baseline & -- & $26.7\%$ \\
                                    & Goldilocks & -- & $28.0\%$ \\
        \midrule
        \multirow{2}{*}{Entropy Reg.} & Baseline & $\beta{=}0.0003$ & $27.4\%$ \\
                                       & Goldilocks & $\beta{=}0.0003$ & $29.1\%$ \\
        \bottomrule
    \end{tabular}
    \label{tab:ablation_results}
\end{table}

\paragraph{DAPO Loss.} We evaluate the framework using the DAPO loss \citep{yu2025dapo}, which refines standard group-based optimization by enforcing mixed-quality batches. Unlike GRPO, DAPO explicitly constrains the sampled group $\{o_i\}_{i=1}^G$ to contain at least one correct and one incorrect solution (i.e., $0 < \text{correct count} < G$). The loss is computed by normalizing the clipped surrogate objective over the total token count of the group:

\begin{align*}
\mathcal{L}_{\text{DAPO}}(\theta) = \mathbb{E} \Bigg[ & \frac{1}{\sum_{i=1}^G |o_i|} \sum_{i=1}^G \sum_{t=1}^{|o_i|} \min \bigg( r_{i,t}(\theta)\hat{A}_{i,t}, \text{clip}\left(r_{i,t}(\theta), 1-\epsilon_{\text{low}}, 1+\epsilon_{\text{high}}\right)\hat{A}_{i,t} \bigg) \Bigg]
\end{align*}

where $r_{i,t}(\theta)$ is the probability ratio $\frac{\pi_\theta(o_{i,t}| \cdot)}{\pi_{\theta_{\text{old}}}(o_{i,t}| \cdot)}$ and the advantage $\hat{A}_{i,t}$ is derived via group-wise normalization $\frac{R_i - \text{mean}(\{R\})}{\text{std}(\{R\})}$. We train both the baseline and the Goldilocks student using this formulation while keeping all other hyperparameters constant. As shown in Table \ref{tab:ablation_results}, Goldilocks retains its performance advantage, demonstrating that our curriculum-based data selection provides benefits orthogonal to the specific alignment loss employed.

\paragraph{Entropy Regularization.} In a separate experiment, we investigate the effect of entropy regularization on the standard GRPO objective. The policy entropy $\mathcal{H}(\pi_\theta(\cdot|x))$ quantifies the diversity of the generated reasoning paths. We introduce an entropy bonus coefficient $\beta$ to the loss function:

\begin{equation}
    \mathcal{L}_{total} = \mathcal{L}_{GRPO} + \beta \mathbb{E}_{x \sim \mathcal{D}} [\mathcal{H}(\pi_\theta(\cdot|x))].
\end{equation}
Maximizing this term prevents premature convergence by encouraging the model to explore a wider range of solution paths. We compare the baseline against Goldilocks with a fixed coefficient of $\beta = 0.0003$. The results, presented in the bottom section of Table \ref{tab:ablation_results}, indicate that even when both models are forced to explore via regularization, Goldilocks maintains its performance advantage.

\section{Related Works} \label{sec:related}

\textbf{Reasoning, Scratchpads, and Chain-of-Thought.}
It is widely established that success on challenging reasoning problems requires models to perform intermediate computations rather than deducting final answers directly. Early work by \citet{nye2021show} demonstrated that supervised training with ``scratchpads" significantly improves performance on tasks like polynomial evaluation and code execution. The study by \citet{wei2022chain} formalized this as Chain-of-Thought (CoT) prompting, showing that language models can generate reasoning traces via in-context demonstrations without explicit training, while \citet{kojima2022large} extended this capability to the zero-shot setting. Theoretically, \citet{abbe2024far} put forward the notion of globality degree of a task as a hardness measure, showing that scratchpads can make learning more efficient by breaking the globality of a task. Furthermore, to improve robustness, \citet{gao2023abstral} proposed training on mathematical abstractions of reasoning traces rather than natural language.

\textbf{RL for Reasoning.}
Several studies have highlighted the difficulty of applying RL to structured reasoning tasks due to the complexity of the output space \citep{uesato2022solving, lightman2023lets}. Standard policy gradient approaches such as PPO \citep{schulman2017proximal, ouyang2022training} and, more recently, GRPO \citep{shao2024deepseekmath} have been adapted to optimize CoT generation using outcome-based verification rewards. A central challenge in this domain is the sparsity of rewards; models must navigate vast search spaces with binary feedback (correct/incorrect), making sample efficiency a significant bottleneck \citep{zelikman2022star}. Consequently, methods that can prioritize high-signal training data are essential for scaling these techniques.

\textbf{Curriculum Learning in Reasoning.}
Curriculum learning addresses data inefficiency by organizing training samples to optimize the learning trajectory. Recent strategies focus on identifying the ``frontier'' of model capability, building on early Teacher-Student frameworks \citep{matiisen2019teacher}. Several methods stabilize training by targeting a desired success rate: \citet{qu2025can} model question difficulty via a Beta distribution to select samples near a target accuracy $\gamma^*$, while \citet{shen2025bots} employ Thompson sampling for similar objectives. Others maximize the gradient signal directly---\citet{chen2025self} prioritize samples with high absolute advantage ($|\hat{A}_{q}|$) using exponential moving averages, and \citet{li2025knapsack} allocate rollout budgets to avoid trivial samples yielding zero gradients. A complementary line filters samples \emph{after} generation: \citet{wang2025dump} and \citet{xiong2025reinforce} perform rollouts and skip the policy update on uninformative ones, though the dominant generation cost is still incurred. Closer to our setting, \citet{gu2026actor} replace history-based heuristics with a bandit-based co-adaptive curriculum, but their formulation is restricted to small-scale datasets. Overall, none of these methods scale to large dataset regimes. Scaffolding approaches \citep{amani2025rl, xi2024training} instead provide partial solutions that are progressively shortened over training.

\section{Limitation and Future Work} \label{sec:conclusion}

In this work, we introduced Goldilocks, a teacher-driven framework that dynamically constructs a training curriculum for reasoning models. While traditional curriculum strategies often operate under significant constraints—relying on history-dependent metrics that require prior interaction with specific samples, or necessitating restrictive auxiliary information like pre-categorized datasets—our approach overcomes these limitations. The Goldilocks Teacher possesses the unique capability to predict the learning potential of unseen questions directly, enabling it to generalize to novel data streams that neither the Student nor the Teacher has previously encountered.

This zero-shot difficulty estimation allows the framework to filter out uninformative samples before the Student expends computational resources on them, significantly accelerating training convergence. Looking forward, this paradigm holds promise for several high-impact directions. First, the variance-aware selection policy could be adapted to other complex domains, such as code generation or theorem proving. Second, and perhaps most critically, this methodology could be applied to large-scale pretraining, where dynamically filtering trillion-token corpora based on predicted utility could drastically reduce training costs and improve model quality.

\bibliographystyle{plainnat} 
\bibliography{bibliography}

\newpage
\appendix \label{app}
\section{Algorithms}\label{app:algorithms}
\begin{algorithm}[ht]
   \caption{Goldilocks Teacher: Data Selection Policy}
   \label{alg:teacher_selection}
\begin{algorithmic}[1]
   \STATE {\bfseries Input:} Dataset $\mathcal{D}$, Teacher Model $f_\phi$, Candidate Size $K$, Exploration $\epsilon$.
   \STATE {\bfseries Output:} Selected question $q^*$.
   
   \vspace{0.2cm}
   \FUNCTION{SelectQuery($\phi$, $\mathcal{D}$)}
       \STATE $\mathcal{C} \leftarrow$ Sample $K$ random indices from $\mathcal{D}$
       \STATE $\mathcal{V} \leftarrow \emptyset$
       \FOR{each candidate $q \in \mathcal{C}$}
           \STATE $\hat{v}_q \leftarrow f_\phi(q)$ \COMMENT{Predict learning utility}
           \STATE $\mathcal{V} \leftarrow \mathcal{V} \cup \{\hat{v}_q\}$
       \ENDFOR
       
       \STATE $r \sim \text{Uniform}(0, 1)$
       \IF{$r < \epsilon$}
           \STATE $q^* \leftarrow \text{Uniform}(\mathcal{C})$ \COMMENT{Ensure coverage ($\epsilon$-greedy)}
        \ELSE
           \STATE $q^* \leftarrow \operatorname*{arg\,max}_{q \in \mathcal{C}} \hat{v}_q$ \COMMENT{Greedy maximization}
       \ENDIF
       \STATE \textbf{return} $q^*$
   \ENDFUNCTION
\end{algorithmic}
\end{algorithm}
\begin{algorithm}[h!]
   \caption{Goldilocks Teacher: Online Refinement}
   \label{alg:teacher_update}
\begin{algorithmic}[1]
   \STATE {\bfseries Input:} Question $q$, Rollouts $\{o_1, ..., o_G\}$, Teacher Model $f_\phi$, Replay Buffer $\mathcal{D}_{\text{replay}}$, Capacity $N_{\text{replay}}$.
   
   \vspace{0.2cm}
   \FUNCTION{UpdateTeacher($q$, $\{o_i\}_{i=1}^G$)}
       \STATE $\hat{p}_q \leftarrow \frac{1}{G} \sum_{i=1}^G \mathbb{I}(o_i \text{ is correct})$ \COMMENT{Empirical success rate}
       \STATE $y_q \leftarrow \sqrt{\hat{p}_q (1 - \hat{p}_q)}$ \COMMENT{Compute target utility}
       
       \STATE Push $(q, y_q)$ to $\mathcal{D}_{\text{replay}}$
       \IF{$|\mathcal{D}_{\text{replay}}| > N_{\text{replay}}$}
           \STATE Remove oldest sample from $\mathcal{D}_{\text{replay}}$ \COMMENT{Sliding window}
       \ENDIF
       
       \STATE \COMMENT{Periodic Update Trigger (Step 6)}
       \IF{Update Condition Met}
          \STATE Shuffle $\mathcal{D}_{\text{replay}}$ and create batches $\mathcal{B}$
           \FOR{epoch $1$ to $E_{\text{teacher}}$}
                \FOR{batch $B \in \mathcal{B}$}
                    \STATE $\mathcal{L} \leftarrow \frac{1}{|B|} \sum_{(q, y) \in B} (f_\phi(q) - y)^2$
                    \STATE $\phi \leftarrow \phi - \eta \nabla_\phi \mathcal{L}$ \COMMENT{Gradient descent step}
                \ENDFOR
           \ENDFOR
       \ENDIF
   \ENDFUNCTION
\end{algorithmic}
\end{algorithm}

\newpage

\section{Generalization to Composite Rewards}
\label{app:reward_generalization}

\begin{assumption}[Convergence of auxiliary reward]\label{as:aux_converge}
Let $r_{\text{total}} = r_{\text{ver}} + f(t)$, where $f(t)$ is any reward component depending on training step $t$. There exists an iteration threshold $N \in \mathbb{N}$ such that for all $t \ge N$, $f(t) = C$ for some constant $C \in \mathbb{R}$.
\end{assumption}

\begin{proposition}
Under Assumption~\ref{as:aux_converge}, for sufficiently large $t$, the GRPO advantage computed from $r_{\text{total}}$ is identical to that computed from $r_{\text{ver}}$ alone.
\end{proposition}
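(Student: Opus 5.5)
The argument rests on the fact that GRPO's group-relative standardization is invariant under adding the same constant to every reward in a group. The plan is to fix a training step $t \ge N$, where $N$ is the threshold of Assumption~\ref{as:aux_converge}, together with a prompt $q$ and its group of rollouts $\{o_1,\dots,o_G\}$, and write the total reward of rollout $i$ as $r^{(i)}_{\text{total}} = r^{(i)}_{\text{ver}} + f(t)$. The key observation is that $f$ depends only on the training step $t$, not on the rollout index $i$ or on the output $o_i$. Within a single group, all rollouts are generated and scored at the same step $t$, so they all receive the same additive shift. For $t \ge N$, Assumption~\ref{as:aux_converge} gives $f(t) = C$, and hence $r^{(i)}_{\text{total}} = r^{(i)}_{\text{ver}} + C$ for every $i$.

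The next step is to compute the group statistics. The group mean shifts by exactly $C$:
\[
\text{mean}(r_{\text{total}}) = \text{mean}(r_{\text{ver}}) + C.
\]
The group standard deviation is unchanged, since each deviation from the mean satisfies $r^{(i)}_{\text{total}} - \text{mean}(r_{\text{total}}) = r^{(i)}_{\text{ver}} - \text{mean}(r_{\text{ver}})$. Substituting these into the advantage formula of Section~\ref{sec:grpo_verifiable_reward} gives
\[
\hat{A}^{\text{total}}_{q,o_i} = \frac{r^{(i)}_{\text{ver}} - \text{mean}(r_{\text{ver}})}{\text{std}(r_{\text{ver}})} = \hat{A}^{\text{ver}}_{q,o_i}.
\]
The same computation applies whether the moments are read as population moments ($p_q$ and $\sqrt{p_q(1-p_q)}$) or as empirical group moments. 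Consequently, the two advantage values derived in Section~\ref{sec:grpo_verifiable_reward}, and everything built on them (Eq.~\eqref{eq:zero_var_prob}, Eq.~\eqref{eq:grad_derivation_norm}), carry over unchanged for all $t \ge N$.

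The main obstacle is not the algebra but two modeling points that the argument must state explicitly.

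\begin{itemize}
\item \textbf{The shift must be constant within a group.} The argument needs $f$ to depend only on $t$ and not on $o_i$. This is how the assumption is written. Even a step-dependent $f(t)$ for $t < N$ would still cancel within a group, but the proposition as stated only needs the regime $t \ge N$.
\item \textbf{The zero-variance case.} When $\text{std}(r_{\text{ver}}) = 0$, the advantage is defined as zero for both reward choices, or both are computed with the same $\epsilon$-regularized denominator. Adding $\epsilon$ to the denominator preserves the identity, because the standard deviation is literally the same number under both rewards.
\end{itemize}

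I would state both conventions explicitly and then conclude the proposition.
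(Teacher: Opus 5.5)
Your proposal is correct and follows essentially the same argument as the paper: for $t \ge N$, the shift $r_{\text{total}} = r_{\text{ver}} + C$ moves the mean by exactly $C$ and leaves the standard deviation unchanged, so the standardized advantage is identical. The paper works only with the population moments $p_q$ and $\sqrt{p_q(1-p_q)}$; your version also covers empirical group moments and the zero-variance convention, which is a harmless refinement that matches the actual GRPO computation slightly more closely.
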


\begin{proof}
For $t \ge N$, we have $r_{\text{total}} = r_{\text{ver}} + C$. Since $r_{\text{ver}} \sim \text{Bernoulli}(p_q)$:
\[
    \text{mean}(r_{\text{total}}) = p_q + C, \quad \text{std}(r_{\text{total}}) = \sqrt{p_q(1-p_q)}.
\]
The advantage is:
\begin{align*}
    \hat{A}_q 
    &= \frac{r_{\text{total}} - \text{mean}(r_{\text{total}})}{\text{std}(r_{\text{total}})} \\
    &= \frac{(r_{\text{ver}} + C) - (p_q + C)}{\sqrt{p_q(1-p_q)}} \\
    &= \frac{r_{\text{ver}} - p_q}{\sqrt{p_q(1-p_q)}},
\end{align*}
which is identical to the advantage computed from $r_{\text{ver}}$ alone.
\end{proof}
\section{Proofs for GRPO with Verifiable Reward} \label{app:vanishing_proof}

In this appendix, we provide the formal derivations and proofs for the policy gradient properties under the GRPO framework with binary verifiable rewards. First, we derive the decomposition of the policy gradient norm. Then, we use this decomposition to prove the exact scaling identity and the structural upper bound of the gradient signal.

\subsection{Derivation of the Gradient Norm Decomposition}

Recall that the gradient of the standard policy objective for a fixed prompt $q$ is the expected product of the trajectory score $S_\theta(o)$ and the sequence-level advantage $\hat{A}_{q, o}$:
\begin{equation}
\nabla_\theta \mathcal{L}_{\text{PG}} = - \mathbb{E}_{o \sim \pi_\theta} \left[ S_\theta(o) \hat{A}_{q, o} \right],
\end{equation}
where the trajectory score is $S_\theta(o) = \sum_{t=1}^T \nabla_\theta \log \pi_\theta(o_t \mid q, o_{<t})$. 

As established in Section~\ref{sec:grpo_verifiable_reward}, the advantage is the standardized binary reward. Since $r_{\text{ver}} \sim \text{Bernoulli}(p_q)$, we substitute the advantage:
\[
\hat{A}_{q, o} = \frac{r_{\text{ver}} - p_q}{\sqrt{p_q(1-p_q)}}
\]
Plugging this into the gradient expectation yields:
\[
\nabla_\theta \mathcal{L}_{\text{PG}} = - \mathbb{E}_{o \sim \pi_\theta} \left[ S_\theta(o) \frac{r_{\text{ver}} - p_q}{\sqrt{p_q(1-p_q)}} \right]
\]
Because $r_{\text{ver}} \in \{0, 1\}$, we apply the law of total expectation to split this into two conditional branches weighted by their respective probabilities, $p_q$ and $(1-p_q)$:
\[
\nabla_\theta \mathcal{L}_{\text{PG}} = - \left( p_q \mathbb{E}\left[ S_\theta(o) \frac{1 - p_q}{\sqrt{p_q(1-p_q)}} \,\bigg|\, r_{\text{ver}}=1 \right] + (1 - p_q) \mathbb{E}\left[ S_\theta(o) \frac{0 - p_q}{\sqrt{p_q(1-p_q)}} \,\bigg|\, r_{\text{ver}}=0 \right] \right)
\]
Since $p_q$ is constant with respect to the inner expectation, we can factor the scalars out. Notice that the coefficients for both terms evaluate to the exact same magnitude:
\[
\frac{p_q(1 - p_q)}{\sqrt{p_q(1-p_q)}} = \sqrt{p_q(1-p_q)}
\]
Substituting this simplified coefficient back into the expression gives:
\[
\nabla_\theta \mathcal{L}_{\text{PG}} = - \left( \sqrt{p_q(1-p_q)} \mathbb{E}\left[ S_\theta(o) \,\big|\, r_{\text{ver}}=1 \right] - \sqrt{p_q(1-p_q)} \mathbb{E}\left[ S_\theta(o) \,\big|\, r_{\text{ver}}=0 \right] \right)
\]
Factoring out the $\sqrt{p_q(1-p_q)}$ scalar and taking the norm of both sides yields the gradient norm decomposition:
\begin{equation}
\left\| \nabla_\theta \mathcal{L}_{\text{PG}} \right\| = \sqrt{p_q(1-p_q)} \Bigg\| \mathbb{E}\left[ S_\theta(o) \,\bigg|\, r_{\text{ver}}=1 \right] - \mathbb{E}\left[ S_\theta(o) \,\bigg|\, r_{\text{ver}}=0 \right] \Bigg\|
\label{eq:grad_derivation_norm_app}
\end{equation}

\subsection{Proof of the Exact Scaling Identity and Upper Bound}

\begin{theorem}[Exact scaling and upper bound of the policy-gradient norm for binary verification rewards]
Fix a prompt $q$, and let $p_q(\theta) = \Pr_{\pi_\theta}(r_{\mathrm{ver}}=1\mid q)$. Given the policy-gradient norm decomposition in Eq.~\eqref{eq:grad_derivation_norm_app}, then for every $p_q\in(0,1)$, the exact gradient norm is given by:
\[
\left\| \nabla_\theta \mathcal{L}_{\text{PG}} \right\| = \sqrt{p_q(1-p_q)} \left\| \nabla_\theta \log \frac{p_q}{1-p_q} \right\|
\]

Furthermore, if the per-token score is bounded such that $\|\nabla_\theta \log \pi_\theta(o_t \mid q, o_{<t})\| \leq B$, then the gradient norm satisfies the upper bound:
\[
\left\|\nabla_\theta \mathcal{L}_{\text{PG}}\right\| \leq 2BT\sqrt{p_q(1-p_q)}
\]
\end{theorem}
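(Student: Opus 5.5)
Starting from the decomposition in Eq.~\eqref{eq:grad_derivation_norm_app}, the whole proof comes down to identifying the difference of conditional expected scores,
\[
\Delta(\theta) := \mathbb{E}\left[ S_\theta(o) \mid r_{\text{ver}}=1 \right] - \mathbb{E}\left[ S_\theta(o) \mid r_{\text{ver}}=0 \right],
\]
with the gradient of the log-odds. The tool is the likelihood-ratio (score-function) identity applied to the event $\{r_{\text{ver}}=1\}$. Write $p_q(\theta) = \sum_o \pi_\theta(o\mid q)\,\mathbb{I}[o \text{ correct}]$. Here the verifier does not depend on $\theta$ and the sum over sequences of length at most $T$ is finite, so differentiation under the sum is legitimate. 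Using $\nabla_\theta \pi_\theta = \pi_\theta \nabla_\theta \log \pi_\theta$ and the autoregressive factorization $\log \pi_\theta(o\mid q) = \sum_t \log \pi_\theta(o_t\mid q,o_{<t})$, this gives
\[
\nabla_\theta p_q = \mathbb{E}\left[S_\theta(o)\,\mathbb{I}[r_{\text{ver}}=1]\right] = p_q\,\mathbb{E}[S_\theta(o)\mid r_{\text{ver}}=1].
\]
For the failure event, $1-p_q$ has gradient $-\nabla_\theta p_q = (1-p_q)\,\mathbb{E}[S_\theta(o)\mid r_{\text{ver}}=0]$. Equivalently, this follows from $\mathbb{E}[S_\theta(o)]=0$, the total-expectation split of the zero-mean score.

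Dividing each identity by its probability, which is allowed since $p_q\in(0,1)$, gives
\[
\mathbb{E}[S_\theta\mid r_{\text{ver}}=1] = \nabla_\theta \log p_q, \qquad \mathbb{E}[S_\theta\mid r_{\text{ver}}=0] = \nabla_\theta \log(1-p_q).
\]
Hence $\Delta(\theta) = \nabla_\theta \log \frac{p_q}{1-p_q}$. Substituting into Eq.~\eqref{eq:grad_derivation_norm_app} yields the exact identity.

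For the upper bound, go back to $\Delta(\theta)$ as a difference of conditional expectations rather than using the log-odds form. The triangle inequality gives $\|S_\theta(o)\| \le \sum_{t=1}^T \|\nabla_\theta \log\pi_\theta(o_t\mid q,o_{<t})\| \le BT$ pointwise. Jensen's inequality (convexity of the norm) then bounds each conditional expectation by $BT$. A second triangle inequality gives $\|\Delta(\theta)\| \le 2BT$, and multiplying by $\sqrt{p_q(1-p_q)}$ gives the claim.

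The main obstacle is the first step: making the score-function identity rigorous and keeping the conditioning straight. Three points need care. The reward must be a deterministic function of the full output $o$, independent of $\theta$. Sequences terminated before $T$ must be handled, by padding or by letting $T$ be the maximal length, so that $S_\theta$ is a sum of at most $T$ terms. And the expectation in Eq.~\eqref{eq:grad_derivation_norm_app} must be the population expectation under $\pi_\theta$, with the advantage normalized by the true $p_q$ rather than the empirical group statistics, because that is the form the decomposition was derived in. Once these conventions are fixed, everything else is algebra and two triangle inequalities.
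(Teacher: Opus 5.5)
Your proof is correct and follows essentially the same route as the paper. Both identify the conditional expected scores via the likelihood-ratio identity for $\nabla_\theta p_q$ together with the zero-mean score (equivalently, differentiating $1-p_q$), and both get the bound from a pointwise triangle inequality on $S_\theta(o)$ followed by a second triangle inequality on the difference. The only cosmetic difference is that you write the two conditional scores directly as $\nabla_\theta \log p_q$ and $\nabla_\theta \log(1-p_q)$, which skips the paper's intermediate form $\|\nabla_\theta p_q\|/\sqrt{p_q(1-p_q)}$ and its separate chain-rule check of the log-odds gradient.
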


\begin{proof}
By definition, the expected reward is:
\[
p_q(\theta) = \mathbb{E}_{o\sim\pi_\theta(\cdot\mid q)} [ r_{\text{ver}}(o) ]
\]
Using the likelihood-ratio identity (REINFORCE trick), the gradient of this probability is:
\[
\nabla_\theta p_q = \nabla_\theta \mathbb{E}_{\pi_\theta} [ r_{\text{ver}}(o) ] = \mathbb{E}_{\pi_\theta} [ r_{\text{ver}}(o) S_\theta(o) ]
\]
Since $r_{\text{ver}}\in\{0,1\}$, this expectation simplifies to:
\[
\nabla_\theta p_q = p_q \mathbb{E} [ S_\theta(o) \mid r_{\text{ver}}=1 ]
\]
Therefore, isolating the conditional expectation gives:
\begin{equation}
\mathbb{E} [ S_\theta(o) \mid r_{\text{ver}}=1 ] = \frac{\nabla_\theta p_q}{p_q}
\label{eq:score_success}
\end{equation}

Next, since $\pi_\theta$ is a normalized probability distribution, the expected score is zero:
\[
\mathbb{E}_{\pi_\theta}[S_\theta(o)] = \mathbb{E}_{\pi_\theta} [ \nabla_\theta \log \pi_\theta(o\mid q) ] = \nabla_\theta \int \pi_\theta(o\mid q)\,do = \nabla_\theta 1 = 0
\]
By the law of total expectation, conditioning on the binary reward yields:
\[
\mathbb{E}[S_\theta(o)] = p_q \mathbb{E} [ S_\theta(o) \mid r_{\text{ver}}=1 ] + (1-p_q) \mathbb{E} [ S_\theta(o) \mid r_{\text{ver}}=0 ]
\]
Setting the left-hand side to zero and substituting our previous result in Eq.~\eqref{eq:score_success} for the successful trajectories gives:
\[
\nabla_\theta p_q + (1-p_q) \mathbb{E} [ S_\theta(o) \mid r_{\text{ver}}=0 ] = 0
\]
Solving for the conditional expectation of failed trajectories yields:
\begin{equation}
\mathbb{E} [ S_\theta(o) \mid r_{\text{ver}}=0 ] = - \frac{\nabla_\theta p_q}{1-p_q}
\label{eq:score_failure}
\end{equation}

Now substitute Eq.~\eqref{eq:score_success} and Eq.~\eqref{eq:score_failure} back into the gradient decomposition from Eq.~\eqref{eq:grad_derivation_norm_app}:
\begin{equation*}
\begin{split}
\left\| \nabla_\theta \mathcal{L}_{\text{PG}} \right\| &= \sqrt{p_q(1-p_q)} \left\| \frac{\nabla_\theta p_q}{p_q} - \left( - \frac{\nabla_\theta p_q}{1-p_q} \right) \right\| \\
&= \sqrt{p_q(1-p_q)} \left\| \frac{\nabla_\theta p_q}{p_q} + \frac{\nabla_\theta p_q}{1-p_q} \right\| \\
&= \sqrt{p_q(1-p_q)} \left\| \frac{\nabla_\theta p_q}{p_q(1-p_q)} \right\| \\
&= \frac{ \left\| \nabla_\theta p_q \right\| }{ \sqrt{p_q(1-p_q)} }
\end{split}
\end{equation*}
This proves the exact norm identity.

To prove the log-odds equivalence, observe that by the chain rule:
\[
\nabla_\theta \log \frac{p_q}{1-p_q} = \frac{1}{\frac{p_q}{1-p_q}} \cdot \frac{\nabla_\theta p_q (1-p_q) - p_q(-\nabla_\theta p_q)}{(1-p_q)^2} = \frac{\nabla_\theta p_q}{p_q} + \frac{\nabla_\theta p_q}{1-p_q} = \frac{\nabla_\theta p_q}{p_q(1-p_q)}
\]
Therefore, multiplying by $\sqrt{p_q(1-p_q)}$ establishes the equivalence:
\[
\sqrt{p_q(1-p_q)} \left\| \nabla_\theta \log \frac{p_q}{1-p_q} \right\| = \frac{\left\| \nabla_\theta p_q \right\|}{\sqrt{p_q(1-p_q)}} = \left\| \nabla_\theta \mathcal{L}_{\text{PG}} \right\|
\]

Finally, to establish the upper bound, we assume that the per-token score is bounded by $B$:
\[
\|\nabla_\theta \log \pi_\theta(o_t \mid q, o_{<t})\| \leq B
\]
By the triangle inequality, the total trajectory score $S_\theta(o)$ is bounded by:
\[
\|S_\theta(o)\| = \left\| \sum_{t=1}^T \nabla_\theta \log \pi_\theta(o_t \mid q, o_{<t}) \right\| \leq \sum_{t=1}^T \|\nabla_\theta \log \pi_\theta(o_t \mid q, o_{<t})\| \leq BT
\]
Because the norm of the random variable is uniformly bounded by $BT$, its conditional expectations are also uniformly bounded by $BT$:
\[
\left\| \mathbb{E}[S_\theta(o) \mid r_{\text{ver}}=1] \right\| \leq BT \quad \text{and} \quad \left\| \mathbb{E}[S_\theta(o) \mid r_{\text{ver}}=0] \right\| \leq BT
\]
Applying the triangle inequality to the initial gradient norm decomposition from Eq.~\eqref{eq:grad_derivation_norm_app}:
\begin{equation*}
\begin{split}
\left\| \nabla_\theta \mathcal{L}_{\text{PG}} \right\| &= \sqrt{p_q(1-p_q)} \left\| \mathbb{E}[S_\theta(o) \mid r_{\text{ver}}=1] - \mathbb{E}[S_\theta(o) \mid r_{\text{ver}}=0] \right\| \\
&\leq \sqrt{p_q(1-p_q)} \Big( \left\| \mathbb{E}[S_\theta(o) \mid r_{\text{ver}}=1] \right\| + \left\| \mathbb{E}[S_\theta(o) \mid r_{\text{ver}}=0] \right\| \Big) \\
&\leq \sqrt{p_q(1-p_q)} (BT + BT) \\
&= 2BT\sqrt{p_q(1-p_q)}
\end{split}
\end{equation*}
This confirms both the exact scaling behavior and the uniform structural upper bound, completing the proof.
\end{proof}

\clearpage

\section{Compute Allocation and Step Normalization}
\label{app:compute_fairness}
In this section, we want to clarify why Goldilocks and the GRPO 
baseline are compared at different step counts, and why this 
normalization make the comparison fair.

The 2 GPUs allocated to the Teacher are responsible for two concurrent 
operations: scoring candidate prompts on demand whenever the Student 
requests a new sample, and periodically fine-tuning the Teacher's 
parameters on the replay buffer every $M_{\text{update}}$ samples 
(Sections~\ref{sec:teacher_data_selection} and~\ref{sec:teacher_training}). 
Because the Student now runs on 6 GPUs instead of 8, we increase its 
gradient accumulation steps from 9 to 12 (for small models) and from 
12 to 16 (for 4B models) so that the global batch size of 288 is 
preserved across both configurations (Appendix~\ref{app:training_config}). 
The consequence is that each Goldilocks Student step takes 
$8/6 \approx 1.33\times$ longer than a baseline step, since it 
accumulates over more micro-batches on fewer devices. Equivalently, 
at any matched step count, the baseline has performed $33\%$ more 
Student-side work than Goldilocks.

Comparing the two methods at the same step count would therefore hand 
the baseline a substantial compute advantage on the policy itself, 
masking part of the curriculum's contribution. To eliminate this 
asymmetry, we compare Goldilocks at step $n$ against the baseline at 
step $\frac{8}{6}n$, so that both methods are evaluated after the same 
amount of Student-side computation. Concretely, each baseline run is 
extended to $\frac{4}{3}\times$ the step count of its Goldilocks 
counterpart, with the exact per-model step counts reported in 
Appendix~\ref{app:training_steps}.

With this normalization, the Teacher's compute overhead is no longer 
a confound. The total GPU budget is fixed at 8 for both methods, and 
the Student-side computation is equalized at every comparison point: 
the 2 GPUs Goldilocks spends on the Teacher are 2 GPUs it does not 
get to spend on the Student, and the step-normalization ensures the 
baseline is not implicitly rewarded for that difference.

\clearpage
\section{Experiment with Different Seeds}\label{app:different_seeds}

To ensure the statistical reliability of our findings, we conducted additional experiments evaluating the Qwen2.5-1.5B model across five independent random seeds. While we recognize the value of multi-seed evaluations across all evaluated models, the computational cost of these experiments is exceptionally high—a single training run requires over 1,000 GPU hours. Consequently, performing a comprehensive multi-seed analysis for every model is computationally infeasible. To robustly demonstrate statistical reliability within our compute constraints, we selected the Qwen2.5-1.5B model as a representative test case.

As summarized in Table~\ref{tab:seed_robustness}, the results demonstrate a consistent and statistically significant advantage for the Goldilocks method over the standard GRPO baseline. Specifically, after 20,000 iterations, the Goldilocks method achieves a mean score of 33.0 (sample standard deviation of 0.54). In contrast, the standard GRPO baseline plateaus at a lower mean score of 30.2 (sample standard deviation of 0.48), even after an extended training run of 26,666 iterations. 

\begin{table}[h]
\centering
\caption{Performance comparison of Qwen2.5-1.5B across five random seeds. The Goldilocks method consistently outperforms the baseline, achieving higher final performance in fewer iterations.}
\begin{tabular}{lccc}
\toprule
\textbf{Method} & \textbf{Iterations} & \textbf{Scores (5 Seeds)} & \textbf{Mean $\pm$ Std} \\
\midrule
GRPO Baseline & 26,666 & 30.6, 30.5, 30.4, 29.4, 30.2 & $30.2 \pm 0.48$ \\
Goldilocks (Ours) & 20,000 & 33.4, 32.9, 32.4, 32.6, 33.7 & $\mathbf{33.0 \pm 0.54}$ \\
\bottomrule
\end{tabular}
\label{tab:seed_robustness}
\end{table}

Figure~\ref{fig:multi_seed_results} visualizes the learning curves averaged over the five seeds. The trajectory confirms that the Goldilocks method maintains a stable and significant lead throughout the training process.

\begin{figure}[h!]
    \centering
    \includegraphics[width=0.7\textwidth]{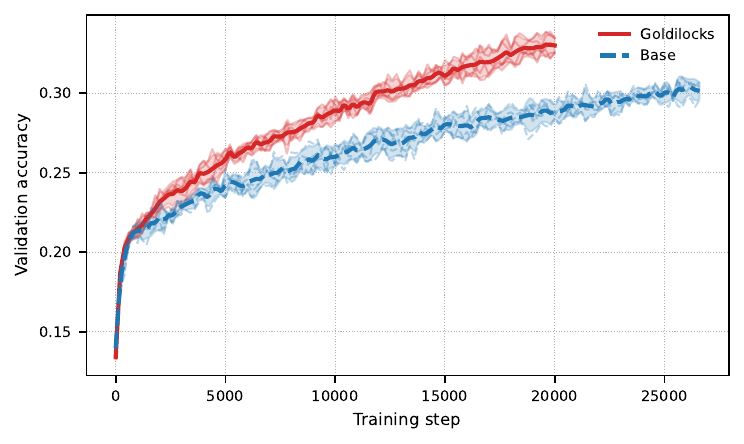}
    \caption{Learning curves for the Qwen2.5-1.5B model across five random seeds. Solid lines represent the mean performance, while the shaded regions indicate the 95\% confidence intervals.}
    \label{fig:multi_seed_results}
\end{figure}

\clearpage

\section{Additional Validation Accuracy Results}
\label{app:appendix_val_acc}

This section provides extended evaluation metrics corresponding to the optimization process discussed in the main text. Specifically, we detail the validation accuracy trajectories of the baseline GRPO model Qwen2.5-1.5B compared with the Goldilocks Student Qwen2.5-1.5B and Teacher Qwen2.5-1.5B. 

Figure~\ref{fig:acc_time} illustrates the validation accuracy as a function of total wall-clock training time (in hours), highlighting the comparative time efficiency of the respective models. 

\begin{figure}[htbp]
    \centering
    \includegraphics[width=0.7\linewidth]{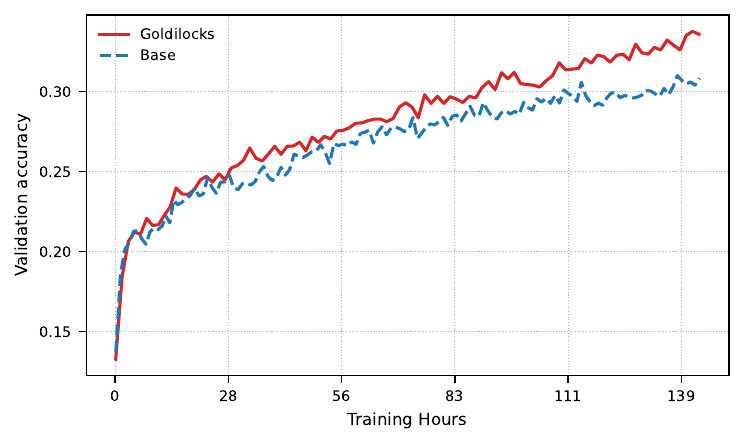} 
    \caption{Validation accuracy evaluated against total training hours. The curves contrast the baseline GRPO model Qwen2.5-1.5B with the Goldilocks Student Qwen2.5-1.5B and Teacher Qwen2.5-1.5B throughout the optimization process.}
    \label{fig:acc_time}
\end{figure}

Furthermore, Figure~\ref{fig:acc_tokens} depicts the validation accuracy measured against the total number of generated tokens. This perspective provides further insight into the sample efficiency of the Goldilocks variants compared to the baseline.

\begin{figure}[htbp]
    \centering
    \includegraphics[width=0.7\linewidth]{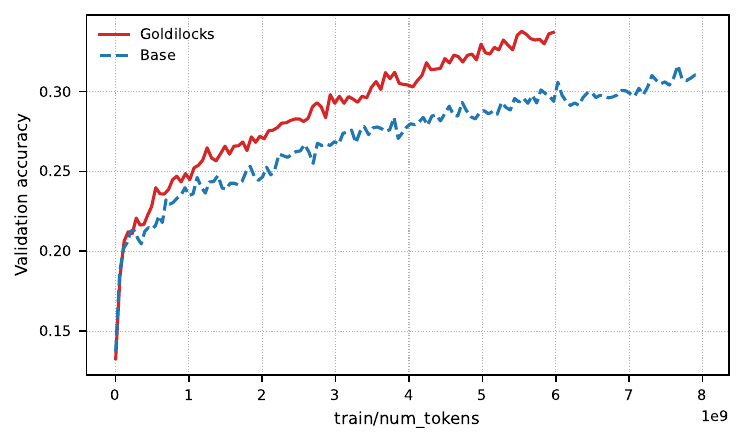} 
    \caption{Validation accuracy evaluated against total generated tokens. The curves contrast the baseline GRPO model Qwen2.5-1.5B with the Goldilocks Student Qwen2.5-1.5B and Teacher Qwen2.5-1.5B throughout the optimization process.}
    \label{fig:acc_tokens}
\end{figure}

\section{Ablation Studies Plots}\label{app:ablation_plots}
In this section, we provide the detailed training dynamics corresponding to the ablation experiments discussed in the main text.

\subsection{Validation Accuracy}
Figure \ref{fig:ablation_val_acc} illustrates the evolution of validation accuracy on the held-out OpenMathReasoning validation set over the course of training. Goldilocks demonstrates a steeper learning curve compared to the GRPO baseline in all settings.

\begin{figure}[h]
    \centering
    \includegraphics[width=0.7\linewidth]{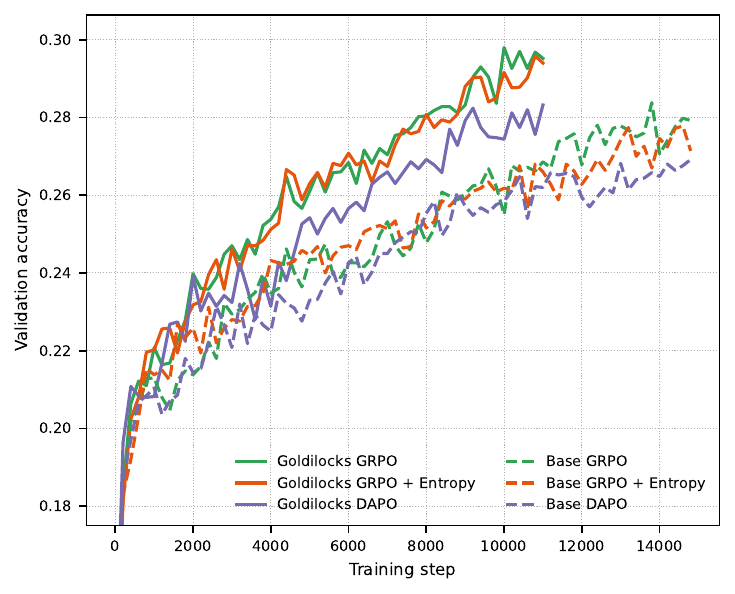}
    \caption{Validation accuracy over training steps.}
    \label{fig:ablation_val_acc}
\end{figure}


\subsection{Data Efficiency (Zero-Variance Fraction)}
Figure \ref{fig:ablation_zero_var} plots the fraction of training questions that yield zero variance in rewards (either all correct or all incorrect). Goldilocks consistently maintains a lower rate of zero-variance in all settings. 

\begin{figure}[h]
    \centering
    \includegraphics[width=0.7\linewidth]{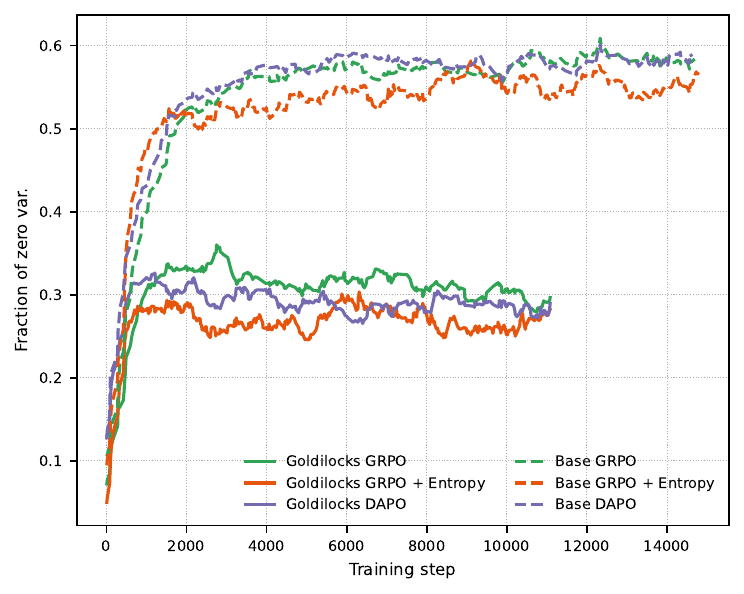}
    \caption{Fraction of questions yielding zero reward variance. }
    \label{fig:ablation_zero_var}
\end{figure}

\clearpage

\section{Extended Analysis of Training Dynamics}\label{app:more_figures}
In this section, we provide additional visualizations of the training dynamics across evaluated model families. For each architecture, we present the evaluation accuracy on the validation set, the fraction of training samples yielding zero reward variance, and the mean training rewards.

\subsection{Olmo2-1B Dynamics}
Figure~\ref{fig:olmo_dynamics} illustrates the training progression for the Olmo2-1B model. Notably, the Goldilocks teacher successfully identifies samples that maintain the training reward near the optimal $0.5$ threshold, minimizing redundancy and facilitating a more efficient climb in evaluation accuracy.

\begin{figure}[h]
    \centering
    \begin{subfigure}{0.45\textwidth}
        \centering
        \includegraphics[width=\linewidth]{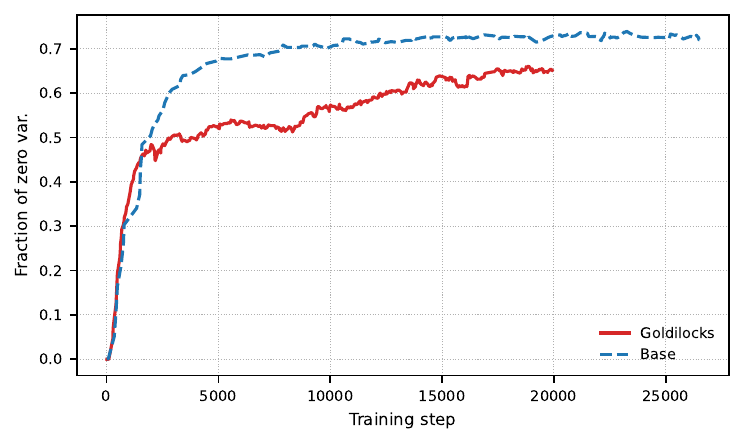}
        \caption{Zero-Variance Fraction}
    \end{subfigure}
    \hspace{1em}
    \begin{subfigure}{0.45\textwidth}
        \centering
        \includegraphics[width=\linewidth]{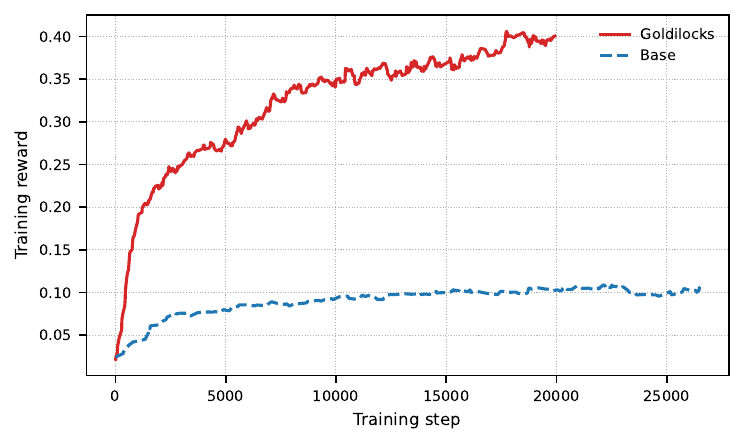}
        \caption{Training Reward}
    \end{subfigure}
    
    \vspace{0.5em} 
    
    \begin{subfigure}{0.45\textwidth}
        \centering
        \includegraphics[width=\linewidth]{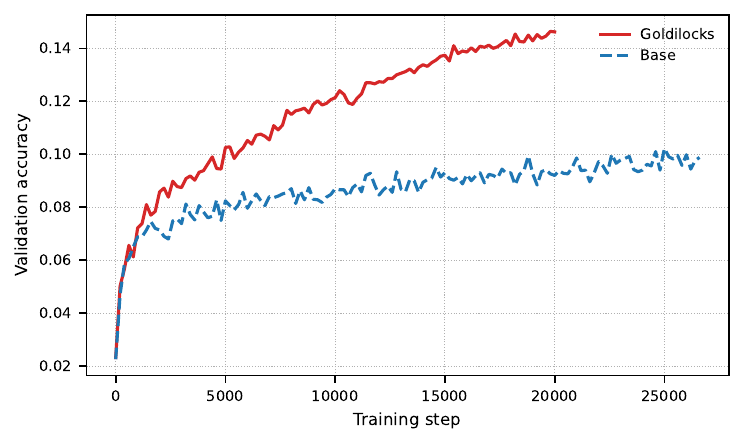}
        \caption{Evaluation Accuracy}
    \end{subfigure}
    \caption{Training dynamics for Olmo2-1B.}
    \label{fig:olmo_dynamics}
\end{figure}

\clearpage

\subsection{Qwen3-4B Dynamics}
The training dynamics for the Qwen3-4B model are presented in Figure~\ref{fig:qwen3_dynamics}. Specifically, the results demonstrate that even when the average reward exceeds 0.5, Goldilocks sampling actively shifts the distribution back toward the 0.5 threshold. This behavior effectively filters out ``trivial" samples, those that the model has already mastered, thereby prioritizing questions with higher informational signal and reducing computational waste on solved instances.

\begin{figure}[h]
    \centering
    \begin{subfigure}{0.45\textwidth}
        \centering
        \includegraphics[width=\linewidth]{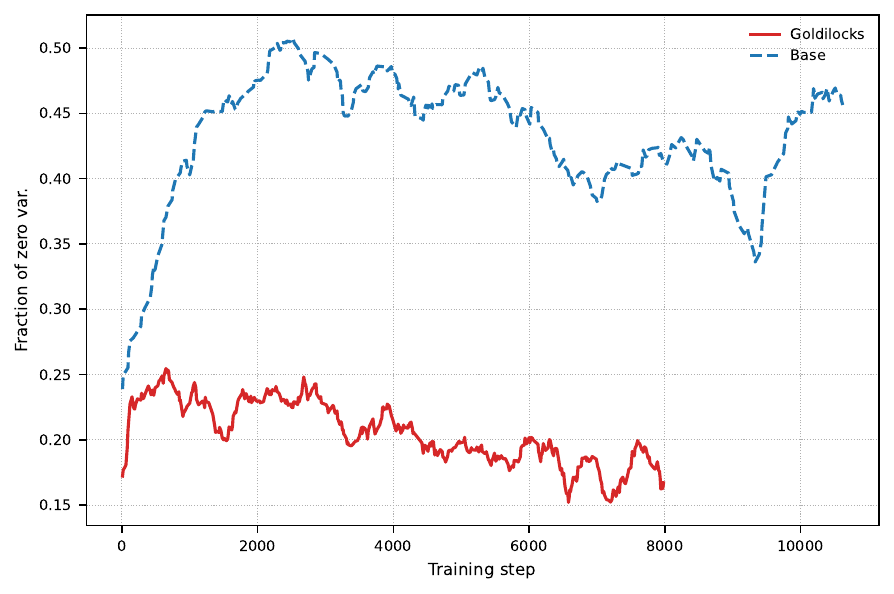}
        \caption{Zero-Variance Fraction}
    \end{subfigure}
    \hspace{1em}
    \begin{subfigure}{0.45\textwidth}
        \centering
        \includegraphics[width=\linewidth]{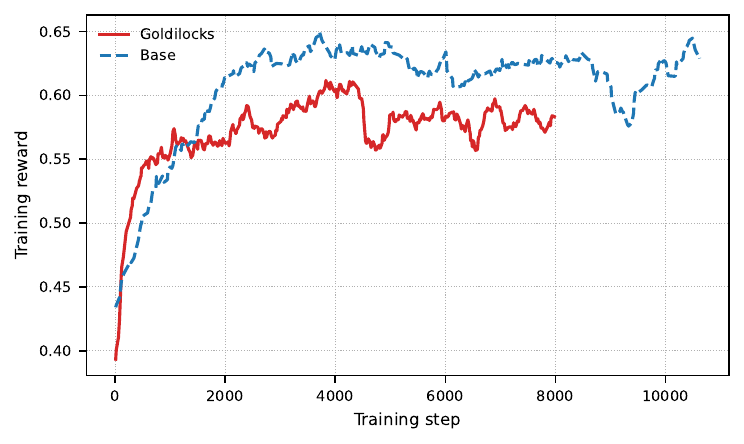}
        \caption{Training Reward}
    \end{subfigure}
    
    \vspace{0.5em}
    
    \begin{subfigure}{0.45\textwidth}
        \centering
        \includegraphics[width=\linewidth]{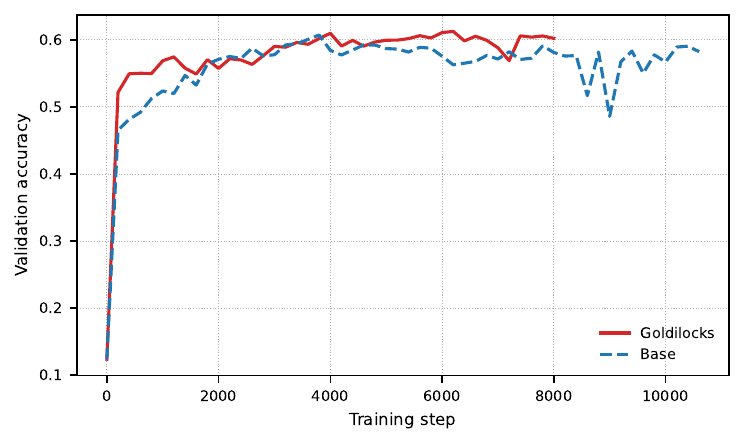}
        \caption{Evaluation Accuracy}
    \end{subfigure}
    \caption{Training dynamics for Qwen3-4B.}
    \label{fig:qwen3_dynamics}
\end{figure}

\clearpage
\subsection{Phi-4-mini-Instruct Dynamics}
Figure~\ref{fig:phi4_dynamics} details the performance of the Phi-4-mini-Instruct model. The curriculum strategy significantly reduces the time spent on zero-variance samples, resulting in higher final validation set performance.

\begin{figure}[h]
    \centering
    \begin{subfigure}{0.45\textwidth}
        \centering
        \includegraphics[width=\linewidth]{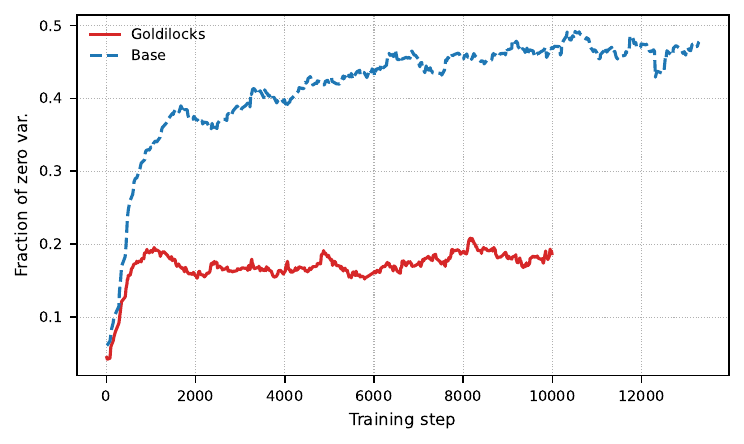}
        \caption{Zero-Variance Fraction}
    \end{subfigure}
    \hspace{1em}
    \begin{subfigure}{0.45\textwidth}
        \centering
        \includegraphics[width=\linewidth]{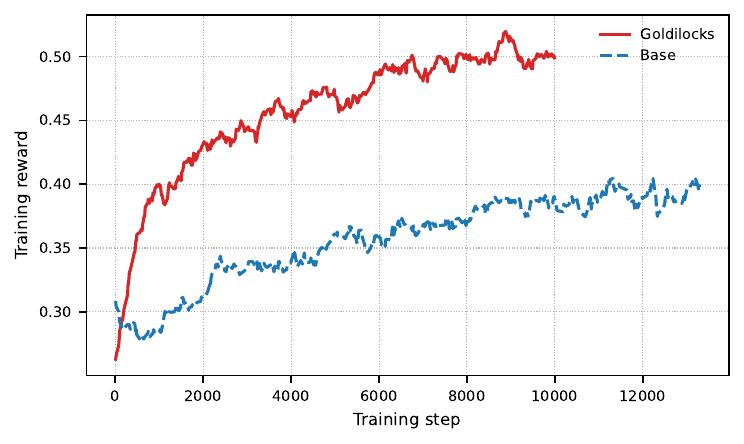}
        \caption{Training Reward}
    \end{subfigure}
    
    \vspace{0.5em}
    
    \begin{subfigure}{0.45\textwidth}
        \centering
        \includegraphics[width=\linewidth]{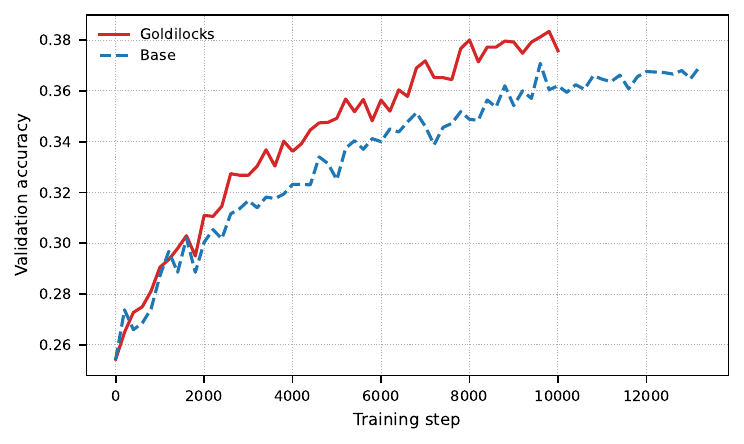}
        \caption{Evaluation Accuracy}
    \end{subfigure}
    \caption{Training dynamics for Phi-4-mini-Instruct.}
    \label{fig:phi4_dynamics}
\end{figure}

\section{Compute Budget and Experimental Scope}
\label{app:compute_scope}

Each GRPO training run in this paper consumes approximately 
1{,}000 GPU hours on H100/B200 hardware 
(Appendix~\ref{app:gpu_hours}). Because OpenMathReasoning 
contains over 3 million problems, training runs are long, and 
the total compute footprint across all reported experiments 
already approaches $\sim$23{,}000 GPU hours 
(Table~\ref{tab:compute_breakdown}). At this cost, adding a 
second benchmark of comparable scale would have required either 
halving the number of evaluated model families or substantially 
shortening each run, both of which we judged more damaging to 
the reliability of our conclusions than restricting the main 
evaluation to a single large reasoning corpus. We instead 
prioritized depth across four model families (1B--4B parameters) 
and extended runs that reach the regime where zero-variance 
batches dominate.

\begin{table}[h]
\centering
\caption{Approximate compute footprint of the experiments 
reported in this paper. Each run is a single end-to-end 
training trajectory at $\sim$1{,}000 GPU hours.}
\label{tab:compute_breakdown}
\begin{tabular}{lcc}
\toprule
\textbf{Experiment block} & \textbf{Runs} & \textbf{GPU hours} \\
\midrule
Main results (Table~\ref{tab:openmath_results}) & 8 & $\sim$8{,}000 \\
Multi-seed robustness (Appendix~\ref{app:different_seeds}) & 8 & $\sim$8{,}000 \\
Loss/regularizer ablations (Section~\ref{sec:ablation}) & 4 & $\sim$4{,}000 \\
Teacher architecture ablations (Appendix~\ref{app:other_teacher_arc}) & 3 & $\sim$3{,}000 \\
\midrule
\textbf{Total} & \textbf{23} & \textbf{$\sim$23{,}000} \\
\bottomrule
\end{tabular}
\end{table}

To compensate for the single-benchmark scope, we invest the 
available compute into three forms of evidence that are 
individually more informative than a shallow second-benchmark 
evaluation: (i) multi-seed results on Qwen2.5-1.5B 
(Appendix~\ref{app:different_seeds}), which back our reported 
gains with run-to-run variance estimates rather than single-run 
point measurements; (ii) training-dynamics diagnostics 
(zero-variance fraction, gradient norm, and reward standard 
deviation; Section~\ref{sec:training_dynamics}) that directly 
verify the mechanistic prediction of 
Section~\ref{sec:grpo_verifiable_reward}, explaining \emph{why} 
Goldilocks helps in a way that is architecture and dataset 
agnostic; and (iii) evaluation under two alternative loss 
formulations (DAPO and entropy-regularized GRPO; 
Section~\ref{sec:ablation}), which confirms that the gain stems 
from the curriculum itself rather than from any specific feature 
of vanilla GRPO. Extending Goldilocks to additional reasoning 
domains (code generation, theorem proving) is left to future 
work (Section~\ref{sec:conclusion}).

\section{Dataset Details}\label{app:datasets}
For our evaluation, we constructed a fixed validation set by randomly sampling 5,000 problems from the NVIDIA OpenMathReasoning dataset (\url{https://huggingface.co/datasets/nvidia/OpenMathReasoning}), ensuring it is strictly disjoint from the training set. To guarantee a consistent comparison, this exact subset is used across all experiments.

\section{Implementation Details}\label{app:implementation}

\subsection{Training Configuration}\label{app:training_config}
We maintain a consistent global batch size of 288 across all experiments. The training infrastructure varies slightly between methods to accommodate the teacher model in the Goldilocks framework. Specifically, the GRPO baseline is distributed across 8 GPUs, whereas the Goldilocks student is distributed across 6 GPUs (reserving resources for the teacher).

Gradient accumulation steps are dynamically adjusted to preserve the fixed global batch size of 288 based on the model size and available devices:
\begin{itemize}
    \item \textbf{Small Models:} We use a per-device batch size of 4.
    \begin{itemize}
        \item \textit{Goldilocks (6 GPUs):} Gradient accumulation is set to $288 / (4 \times 6) = 12$ steps.
        \item \textit{GRPO (8 GPUs):} Gradient accumulation is set to $288 / (4 \times 8) = 9$ steps.
    \end{itemize}
    \item \textbf{4B Models:} We use a per-device batch size of 3.
    \begin{itemize}
        \item \textit{Goldilocks (6 GPUs):} Gradient accumulation is set to $288 / (3 \times 6) = 16$ steps.
        \item \textit{GRPO (8 GPUs):} Gradient accumulation is set to $288 / (3 \times 8) = 12$ steps.
    \end{itemize}
\end{itemize}

\subsection{Hyperparameters}
Table \ref{tab:hyperparams_student} lists the hyperparameters used for the Student model (and the GRPO baseline), while Table \ref{tab:hyperparams_teacher} details the specific hyperparameters for the Goldilocks Teacher.

\begin{table*}[h!]
    \centering
    \caption{Hyperparameters for the experiments. Left: Student Policy and GRPO Baseline. Right: Goldilocks Teacher Model.}
    \label{tab:hyperparams_combined}
    
    \begin{minipage}{0.48\textwidth}
        \centering
        \subcaption{Student Policy \& GRPO Baseline}
        \label{tab:hyperparams_student}
        \begin{tabular}{lc}
            \toprule
            \textbf{Hyperparameter} & \textbf{Value} \\
            \midrule
            Learning Rate & $1.0 \times 10^{-5}$ \\
            Max Prompt Tokens & 1024 \\
            Max Completion Tokens & 2048 \\
            Number of Generations ($G$) & 16 \\
            Temperature (Sampling) & 0.7 \\
            Temperature (Evaluation) & 0.0 \\
            KL Coefficient & 0.0 \\
            Update Iterations & 1 \\
            LoRA & False \\
            Evaluation Frequency & 200 steps \\
            \bottomrule
        \end{tabular}
    \end{minipage}
    \hfill
    \begin{minipage}{0.48\textwidth}
        \centering
        \subcaption{Goldilocks Teacher Model}
        \label{tab:hyperparams_teacher}
        \begin{tabular}{lc}
            \toprule
            \textbf{Hyperparameter} & \textbf{Value} \\
            \midrule
            Learning Rate & $1.0 \times 10^{-6}$ \\
            Per-Device Batch Size & 8 \\
            Training Epochs per Update ($E_{\text{teacher}}$) & 4 \\
            Candidate Set Size ($K_{\text{candidate}}$) & 8 \\
            Epsilon ($\epsilon$) & 0.2 \\
            Replay Buffer Capacity ($N_{\text{replay}}$)  & 64 \\
            Update Frequency ($M_{\text{update}}$) & 4 \\
            Loss Function & MSE \\
            \bottomrule
        \end{tabular}
    \end{minipage}
\end{table*}

\subsection{Code details}
We implemented the framework using a decoupled client-server architecture to efficiently manage the interaction between the Student and Teacher models. We found that the most robust approach is to isolate these components into two distinct processes:

\begin{itemize}
    \item \textbf{Teacher Process (Server):} Acts as a central server that manages the curriculum and the replay buffer. It listens for requests from the student process.
    \item \textbf{Student Process (Client):} Runs the standard RL loop. When a new training sample is required, the student sends a request to the teacher.
\end{itemize}

The communication workflow operates as follows:
\begin{enumerate}
    \item \textbf{Sample Request:} The student requests a new problem $q$ from the teacher. The teacher selects a question based on its current utility estimates and sends it to the student.
    \item \textbf{Feedback Loop:} After the student generates rollouts and computes the rewards for $q$, it sends these results back to the teacher.
    \item \textbf{Asynchronous Updates:} The teacher aggregates this feedback into its replay buffer. Once the number of received samples reaches, the teacher triggers its own optimization step to refine the utility predictor $f_\phi$.
\end{enumerate}

This design allows for flexible scaling, as the heavy computation of the student (generating rollouts) is decoupled from the logic of the teacher's selection and update mechanism.

\subsection{Hardware and Computational Cost}\label{app:gpu_hours}
Our experiments were conducted using NVIDIA H100 and B200 GPUs. Specifically, the larger 4B model variants were trained on B200 nodes. The average training run was approximately 1000 GPU hours for both GRPO and Goldilocks framework.

\subsection{Training Steps and Accuracy Reporting}\label{app:training_steps}
Each model variant was trained for a fixed number of steps to ensure a fair comparison across experimental conditions. The total training steps for each model are detailed below:

\begin{table}[h]
\centering
\caption{Training step counts for Goldilocks and Baseline configurations. Baselines are allocated $\frac{4}{3}$ $\times$ the steps of Goldilocks.}
\label{tab:training_steps}
\begin{tabular}{lcc}
\toprule
\textbf{Model} & \textbf{Goldilocks Steps} & \textbf{Baseline Steps} \\ \midrule
Olmo2-1B                 & 20,000 & 26,800 \\
Qwen2.5-1.5B             & 20,000 & 26,800 \\
Qwen3-4B     & 8,000  & 10,800 \\
Phi-4-mini-Instruct (4B) & 10,000 & 13,400 \\ \bottomrule
\end{tabular}
\end{table}

\section{Ablation Studies on Teacher Architectures}\label{app:other_teacher_arc}
In addition to the teacher configuration introduced in the main text, we investigated alternative architectural choices for the utility predictor $f_\phi$. Specifically, we explored different embedding extraction methods and Parameter-Efficient Fine-Tuning (PEFT) strategies.

\subsection{Embedding Extraction Strategy}
While our primary method utilizes mean pooling over the final layer embeddings to obtain a comprehensive question representation, we also evaluated using only the final token embedding. Given that the teacher base models are autoregressively pretrained, the final token should theoretically encapsulate the semantic summary of the sequence. However, empirical results showed that this approach was less robust than mean pooling.

\subsection{Frozen Encoder}
To test whether the teacher's gain requires updating the language-model backbone at all, we trained a variant in which the pretrained encoder is frozen and only the linear projection head $(\mathbf{w}, b)$ is updated. This configuration is attractive in principle because the base-model embeddings can be precomputed and cached, eliminating the dominant cost of teacher training. Empirically, however, the frozen-encoder teacher failed to track the student's evolving capabilities: its predictions collapsed toward the mean of the replay buffer, and the student's training reward and validation accuracy regressed toward the GRPO baseline. Closing this gap requires the backbone itself to adapt, which only full or sufficiently expressive parameter-efficient fine-tuning can provide.

\subsection{LoRA-based Training}
To reduce the computational overhead of the teacher model, we experimented with Low-Rank Adaptation (LoRA) instead of full-parameter fine-tuning. We employed a high-rank configuration with the following parameters:
\begin{itemize}
    \item \textbf{Rank ($r$):} 256
    \item \textbf{Alpha ($\alpha$):} 512
    \item \textbf{Dropout:} 0.01
\end{itemize}
Despite the high rank, this configuration failed to match the performance of the full-parameter teacher. We observed that the teacher's ability to accurately predict the standard deviation of rewards was significantly diminished, likely because the fine-grained semantic features required for difficulty estimation are better captured through full-model updates

\section{Prompts}\label{app:prompts}
The final input to the model is constructed by concatenating a fixed header, the specific math problem, and a fixed footer:

\[
\text{Input} = \texttt{PROMPT\_HEADER} + \text{Question} + \texttt{PROMPT\_FOOTER}
\]

The exact text used for the header and footer is provided below.

\subsection*{Prompt Header}
\begin{verbatim}
You are an expert problem solver. Your task is to solve the problem by 
explicitly showing your detailed Chain of Thought.
You must write out your reasoning step-by-step, explaining your logic 
for every calculation or deduction.
**Crucially, the final answer must be enclosed in a \boxed{...} command.**
\end{verbatim}

\subsection*{Prompt Footer}
\begin{verbatim}
Strictly follow this output format:
### Step-by-Step Reasoning
<Write your detailed thought process here. Do not skip steps.>

### Final Answer
\boxed{<your final answer>}
**Important: The \boxed{...} command should only contain the final, 
simplified numerical or symbolic answer, with no extra text or units.**
\end{verbatim}


\end{document}